\documentclass[12pt]{article}
\usepackage{amsmath, amsthm, amsfonts, amssymb}
\usepackage{graphicx,psfrag,epsf}
\usepackage{enumerate}
\usepackage{natbib}
\usepackage{url} % not crucial - just used below for the URL 
\usepackage{xurl}
\usepackage{booktabs}
\usepackage{array, multirow}
\usepackage{mathtools} % Added after template
\usepackage{changepage} % Added after template
\usepackage{float}
\usepackage{xcolor}

\usepackage{algorithm}
\usepackage{algpseudocode}

\usepackage[nodisplayskipstretch]{setspace}
\newcommand{\globalcolor}[1]{%
	\color{#1}\global\let\default@
}

\PassOptionsToPackage{hyphens}{url}\usepackage{hyperref}

\newcommand{\blind}{0}

\theoremstyle{plain}% Theorem-like structures provided by amsthm.sty
\newtheorem{theorem}{Theorem}

\theoremstyle{definition}
\newtheorem{definition}{Definition}

\theoremstyle{remark}

\usepackage[toc,titletoc]{appendix}

\usepackage{authblk}

\begin{document}

\def\spacingset#1{\renewcommand{\baselinestretch}%
{#1}\small\normalsize} \spacingset{1}

\renewcommand{\topfraction}{0.85}      % Max % of page top allowed for photos (default 0.7)
\renewcommand{\bottomfraction}{0.85}   % Max % of page bottom allowed for photos (default 0.3)
\renewcommand{\textfraction}{0.15}     % Min % of page that *must* be text (default 0.2)
\renewcommand{\floatpagefraction}{0.8} % Min % a float-only page must occupy (default 0.5)

% Remove the Oxford/serial comma before the "and"
\renewcommand\Authands{ and }  % Separator for the last author when n >= 3
\renewcommand\Authand{ and }   % Separator between authors when n = 2

% Customizing authblk formatting to match top-tier journals
\renewcommand\Authfont{\large\bfseries}
\renewcommand\Affilfont{\small\mdseries\itshape}
\setlength{\affilsep}{0.5em}

%%%%%%%%%%%%%%%%%%%%%%%%%%%%%%%%%%%%%%%%%%%%%%%%%%%%%%%%%%%%%%%%%%%%%%%%%%%%%%

% \if0\blind
% {
%   \title{\bf A general approach for Bayesian case influence analysis in GARCH models}
%   \author{\thanks{
%     The authors gratefully acknowledge \textit{please remember to list all relevant funding sources in the unblinded version}}\hspace{.2cm}\\
    
%     Adriano K. Suzuki \\
%     Department of ZZZ, University of WWW}
%   \maketitle
% } \fi
\if0\blind
{
	\title{\bf Generalized Neural Distributional Regression}
	
	\author[1]{Natan Hilario da Silva\thanks{Corresponding author: \href{mailto:natan.hilario@usp.br}{natan.hilario@usp.br}}}
	\author[2]{Vicente Garibay Cancho}
	\author[2]{Adriano Kamimura Suzuki}
	
	\affil[1]{Federal University of São Carlos, University of São Paulo}
	\affil[2]{Institute of Mathematics and Computer Sciences, University of São Paulo}
	
	\date{}
	\maketitle
} \fi

\if1\blind
{
	\bigskip
	\bigskip
	\bigskip
	\begin{center}
		{\LARGE\bf Generalized Neural Distributional Regression}
	\end{center}
	\medskip
} \fi

\bigskip
\begin{abstract}
	We introduce the Generalized Neural Distributional Regression (GNDR) framework, which seamlessly embeds deep neural networks into the parameter space of classical probability distributions. To reconcile the inherent non-identifiability of deep architectures with maximum likelihood theory, we propose a two-step semi-parametric estimation procedure. By isolating the terminal prediction heads and treating the upstream network as a fixed, non-linear basis expansion, GNDR enables the extraction of analytical Fisher Information matrices. This facilitates rigorous uncertainty quantification, generating observation-specific confidence bands and tolerance intervals via the multivariate Delta method. We demonstrate the framework's versatility and superior distributional calibration across diverse data modalities, including overdispersed clinical counts, right-censored transcriptomic survival profiles under a mixture cure framework, and zero-truncated age distributions derived directly from unstructured facial images. The methodology is natively implemented in the open-source Python package \textit{thetaflow}.
\end{abstract}

\noindent%
{\textbf{Keywords:}} neural distributional regression, deep learning, uncertainty quantification, semi-parametric inference, survival analysis\\
\vfill

\newpage
\spacingset{1.8} % DON'T change the spacing!

\section{Introduction}
\label{sec:intro}
The proliferation of complex data structures, ranging from genomics and high-frequency financial time series to unstructured images and natural language, poses significant challenges for modern statistical modeling. In these settings, the traditional assumption of linear predictors is overly restrictive and rarely tenable. Conversely, while contemporary machine learning algorithms excel at capturing intricate nonlinearities, they are predominantly deployed as ``black-box'' predictive engines. Consequently, they typically lack the formal uncertainty quantification, interpretability, and rigorous asymptotic properties that are fundamental to classical statistical inference.

To accommodate high-dimensional data within traditional mathematical models, researchers often rely on dimensionality reduction and pre-processing techniques. Examples include Principal Component Analysis (PCA; \citet{greenacre2022principal}), algorithmic image feature extraction \citep{kumar2014detailed}, and Autoencoders \citep{bank2023autoencoders}. While these are powerful techniques for data simplification, they are typically applied as an isolated, two-stage procedure prior to model fitting. Because this feature extraction process is uncoupled from the ultimate predictive objective, it inevitably leads to information loss; the retained features are completely agnostic to the response variable.

Modern developments in Deep Learning (DL) and Signal Processing demonstrate that this pre-processing stage can be seamlessly integrated into the model estimation procedure. For example, convolutional neural networks (CNNs) adaptively learn filter weights to capture spatial hierarchies in images, while transformer architectures encode complex sequential dependencies from text into dense vector representations. Crucially, within these frameworks, the representation layers are trained end-to-end; their parameters are optimized jointly with the predictive model via a unified loss function, ensuring that the extracted features are explicitly tailored to the primary analytical objective.

Another source of debate among statisticians arises from the fact that most machine learning algorithms function as highly parameterized, distribution-free predictive engines. While they yield highly accurate point forecasts, they traditionally lack a formal framework for uncertainty quantification (UQ) to diagnose predictive confidence. Substantial efforts have been directed toward resolving this limitation within deep learning; \citet{abdar2021review} and \citet{he2026survey} provide comprehensive surveys of state-of-the-art UQ advancements over the past decade. A prominent area of this research focuses on Bayesian Neural Networks \citep{neal2012bayesian}, which place prior distributions over network weights to establish a principled mechanism for probabilistic inference.

Despite these methodological advances, deploying such models typically requires a specialized and burdensome programmatic workflow. This process necessitates the manual specification of architectures, loss functions, and optimizers, alongside the management of low-level computational mechanics, such as mini-batch optimization and memory allocation, that are critical for convergence in high-dimensional spaces. While these engineering practices are standard among machine learning researchers, they introduce a steep learning curve for the broader statistical community. Nevertheless, integrating models capable of processing unstructured data represents a profound opportunity for statistical practice, particularly when grounded by rigorous statistical expertise.

Motivated by this opportunity, various statistical sub-disciplines are increasingly directing efforts toward incorporating deep learning architectures into classical modeling frameworks. Beyond survival analysis, where \citet{wiegrebe2024deep} provide a comprehensive survey of 61 recent approaches, similar neuro-statistical techniques are being actively adapted for time series forecasting \citep{kong2025deep}, causal inference \citep{jiao2024causal}, and spatio-temporal modeling \citep{wang2020deep}. However, current developments remain largely isolated within their respective domains. There is a distinct absence of a unifying theoretical framework capable of subsuming these disparate models as special cases. Establishing such a generalized class of models is of critical importance; it would not only provide a cohesive foundation for asymptotic inference across diverse applications but also consolidate fragmented, field-specific software into a single, robust computational ecosystem.

In this work, we formalize the integration of deep learning as a distributional regression problem. Our methodology conceptually parallels the Generalized Additive Models for Location, Scale and Shape (GAMLSS) framework \citep{rigby2005generalized}, which allows multiple parameters of a given distribution to be modeled via additive spline formulations. As we will demonstrate, replacing or augmenting traditional basis functions with neural network representations yields highly expressive models and better-calibrated residuals. While related concepts of neural distributional regression have recently emerged in the literature, independently explored by \citet{li2021deep} and \citet{rugamer2020neural} with the latter formalized in the \textit{deepregression} R package, our work introduces a distinct, generalized parameterization and training strategy.

Building upon these foundational ideas, we propose the Generalized Neural Distributional Regression (GNDR) framework. Let $Y\sim\mathcal{D}(\boldsymbol \phi)$ be a response variable governed by a $p$-dimensional parameter vector, $\boldsymbol \phi=(\phi_1,\cdots,\phi_p)^\top$, where the distribution $\mathcal{D}$ admits a twice continuously differentiable log-likelihood function. We postulate that each parameter can be mapped to an unconstrained predictor via a strictly monotonic link function $g_k$, formulated as an additive composite:
\begin{equation}
	g_k(\phi_k) = \eta_k \coloneqq \mathcal{G}_k(\boldsymbol{z}_k; \boldsymbol \xi) + \mathcal{F}_k(\boldsymbol x; \boldsymbol \omega), \quad k = 1, \cdots, p,
	\label{eq:gndr_parameters_structure}
\end{equation}
where $\mathcal{F}_k(\cdot; \cdot)$ represents the $k$-th output of an arbitrary neural network architecture $\mathcal{F}$, whose last hidden layer is fully connected (dense) to its $p$ outputs and whose output is followed by the identity activation function. This network processes an observed unstructured covariate tensor $\boldsymbol x$ and is parameterized by a tensor of trainable weights $\boldsymbol \omega$. Conversely, $\mathcal{G}_k(\cdot; \cdot)$ represents a classical, twice-differentiable structured function (such as a linear predictor) acting upon an observed covariate tensor $\boldsymbol z_k$. To guarantee structural identifiability within the $k$-th unconstrained predictor, the feature space of $\boldsymbol z_k$ processed by $\mathcal{G}_k$ must be strictly disjoint from the specific features of $\boldsymbol x$ utilized by $\mathcal{F}_k$. Notably, this constraint applies strictly element-wise for a given index $k$; the framework permits $\boldsymbol{x}$ and $\boldsymbol{z}_k$ to share covariates globally, provided those shared features do not simultaneously inform both the neural and classical components of the exact same distribution parameter $\phi_k$. Finally, $\boldsymbol \xi$ represents a vector of unknown, unconstrained, learnable parameters (e.g., classical linear coefficients) governing the structured component.

To illustrate the flexibility of this parameterization, consider the case where $\mathcal{D}$ is a normal distribution with parameter vector $\boldsymbol \phi=(\mu,\sigma^2)^\top$. We highlight four specific paradigms derived from this framework, assuming $x$ represents a scalar observed covariate.

\begin{enumerate}
	\item \textbf{Classical Normal Model (Unknown $\mu$ and $\sigma^2$):} By setting the neural components to zero ($\mathcal{F}_1 \equiv \mathcal{F}_2 \equiv 0$) and defining the structured components as global constants, we obtain:
	\begin{equation*}
		\mu = \xi_1 \quad \text{ and } \quad \log(\sigma^2) = \xi_2.
	\end{equation*}
	This recovers the standard Normal distribution without covariates, where $\boldsymbol \xi = (\xi_1, \xi_2)^\top$ acts as the vector of unconstrained parameters.
	\item \textbf{Homoscedastic Neural Regression:} By assigning a neural network to the location parameter ($\mathcal{G}_1 \equiv 0$) and a global constant to the scale parameter ($\mathcal{F}_2 \equiv 0$), we set:
	\begin{equation*}
		\mu = \mathcal{F}_1(x ; \boldsymbol \omega) \quad \text{ and } \quad \log(\sigma^2) = \xi_2.
	\end{equation*}
	This pairs a highly flexible neural regression over the mean with a static, global variance.
	\item \textbf{Heteroscedastic linear regression model:} If we apply a classical linear predictor to the mean ($\mathcal{F}_1 \equiv 0$) and a neural network to the variance ($\mathcal{G}_2 \equiv 0$), we obtain:
	\begin{equation*}
		\mu = \beta_0 + \beta_1 x \quad \text{ and } \quad \log(\sigma^2) = \mathcal{F}_2(x ; \boldsymbol \omega).
	\end{equation*}
	This model pairs a strictly interpretable linear location shift with a highly flexible variance function.
	\item \textbf{Fully Flexible Heteroscedastic Regression:} The most general non-linear framework is achieved by setting the classical components to zero ($\mathcal{G}_1 \equiv \mathcal{G}_2 \equiv 0$) and deploying a dual-output neural network:
	\begin{equation*}
		\mu = \mathcal{F}_1(x ; \boldsymbol \omega) \quad \text{ and } \quad \log(\sigma^2) = \mathcal{F}_2(x ; \boldsymbol \omega).
	\end{equation*}
	This model maximizes predictive flexibility for both parameters, albeit at the cost of classical parameter interpretability.
\end{enumerate}

These four configurations are illustrated in Figure \ref{fig:normal_distribution_examples} for a single numerical predictor. However, because $\boldsymbol x$ is processed through the network $\mathcal{F}$, the input is not restricted to tabular covariates; it can readily assume the form of unstructured images, sequential text, or high-dimensional arrays. Crucially, despite the incorporation of these complex data modalities, the framework preserves the capacity for rigorous uncertainty quantification. The asymptotic theory justifying the construction of valid prediction intervals for $Y$ and confidence intervals for the distributional parameters is detailed in Section \ref{sec:asymptotic_theory}.

This structural flexibility allows researchers to tailor the balance between feature extraction and interpretable parameter inference. For instance, a biostatistician could seamlessly integrate unstructured medical imaging ($\boldsymbol x$) with strictly linear, interpretable clinical covariates ($\boldsymbol z_j$) within the same joint likelihood. Furthermore, because parameter estimation relies on first-order stochastic optimization and mini-batching rather than computationally intensive matrix inversions, the framework scales effortlessly to massive datasets. The open-source implementation is available via the \textit{thetaflow} library in Python (\url{https://github.com/nhilariosilva/thetaflow}). Specifically for the applications presented in this work, the codes can be found in (\url{https://github.com/nhilariosilva/gndr_paper}).

\begin{figure}
	\centering
	\includegraphics[width = 0.9\linewidth]{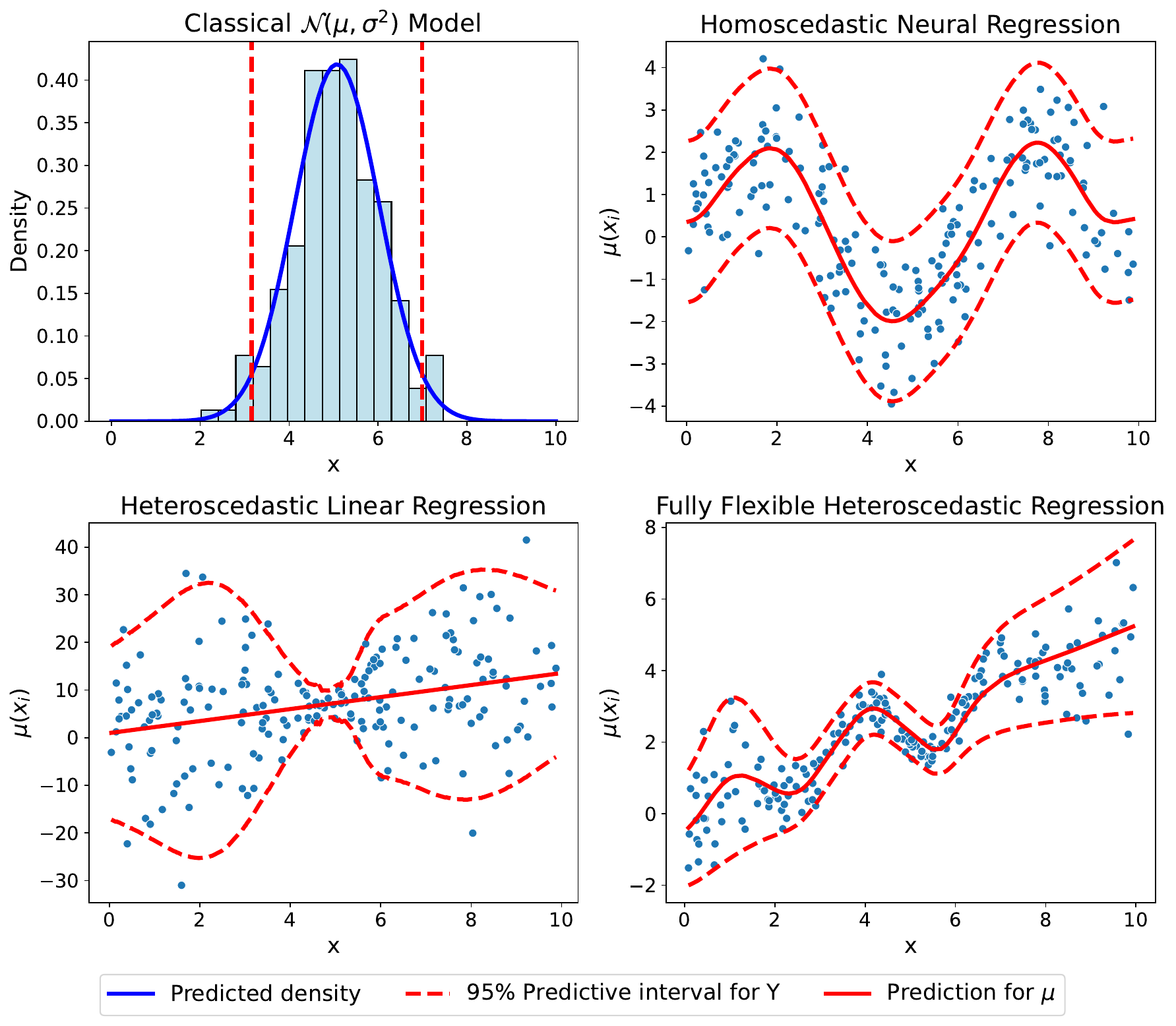}
	\caption{Four natural regression models obtained from a $\mathcal{N}(\mu, \sigma^2)$ distribution assuming neural network components.}.
	\label{fig:normal_distribution_examples}
\end{figure}

\subsection{Organization of the Paper}

The remainder of this article is organized as follows. Section 2 formalizes the proposed Generalized Neural Distributional Regression (GNDR) framework, detailing the integration of deep neural architectures into the base parametric space and discussing the critical conditions for model identifiability. In Section 3, we establish the maximum likelihood estimation procedures and derive the asymptotic properties of the proposed estimators, with the proof of the central asymptotic theorem deferred to Appendix \ref{appendix}. Furthermore, this section introduces rigorous diagnostic tools for out-of-sample model selection and goodness-of-fit evaluation. Section 4 presents a series of empirical applications that demonstrate the methodology's versatility, interpretability, and predictive performance across a diverse spectrum of data structures. Finally, Section 5 provides concluding remarks and outlines promising directions for future methodological extensions and software development.

\section{Neural Network Statistical Models}
\label{sec:methodology}
\subsection{Neural Approximators for Distribution Parameters}
\label{sec:neural_approximators}

Recalling the foundational framework defined in Equation (\ref{eq:gndr_parameters_structure}), we note that most classical statistical models based on linear predictors can be exactly recovered by assuming $\mathcal{F}_k \equiv 0$. In particular, we can mirror the semi-parametric structure of the GAMLSS family of models by restricting the structured component to a strict linear predictor, $\mathcal{G}_k(\boldsymbol{z}_k; \boldsymbol{\xi}) = \boldsymbol{z}_k^\top \boldsymbol{\beta}_k$, where $\boldsymbol{\xi}$ contains the stacked parameter vectors $\boldsymbol{\beta}_k$ for all $k \in \{1, \dots, p\}$. By leaving an arbitrary architecture for the neural component $\mathcal{F}(\boldsymbol{x}; \boldsymbol{\omega})$, the $k$-th unconstrained predictor becomes:
\begin{equation}
	g_k( \phi_k ) = \eta_k = \boldsymbol{z}_k^\top \boldsymbol{\beta}_k + \mathcal{F}_k(\boldsymbol x ; \boldsymbol \omega), \quad k = 1, \cdots, p.
	\label{eq:gndr_parameters_linear_structure}
\end{equation}
As established, to preserve structural identifiability within this additive formulation, the feature space of $\boldsymbol{z}_k$ must be strictly disjoint from the specific features of $\boldsymbol{x}$ processed by $\mathcal{F}_k$. This hybrid architecture allows structured tabular data and highly unstructured data to be jointly integrated into a generalized distributional regression model.

% Must find references for the claim: "splines, this approach scales poorly, quickly succumbing to the curse of dimensionality as the number of interacting covariates increases."
In the foundational GAMLSS framework, the non-linear component is typically formulated using additive univariate smoothing splines to capture the marginal effects of individual predictors. While multi-way interactions can be incorporated via tensor product splines, this approach scales poorly, quickly succumbing to the curse of dimensionality as the number of interacting covariates increases.

In contrast, by replacing the additive spline structure with an arbitrary deep neural network $\mathcal{F}$, the GNDR framework is able to capture high-order, non-linear feature interactions. Furthermore, since the architecture of $\mathcal{F}$ employ shared hidden layers prior to splitting into the $p$ distinct outputs, the framework naturally learns latent representations that induce dependency structures between the distributional parameters $\phi_k$ themselves, a modeling flexibility that is notoriously difficult to specify a priori within classical additive frameworks.

\subsection{Model identifiabiliy}
\label{sec:model_identifiability}

As established in Equation (\ref{eq:gndr_parameters_structure}), the native distributional parameter vector $\boldsymbol \phi$ is no longer treated as a free parameter, but rather as a deterministic function of the classical fixed effects $\boldsymbol \xi$ and the neural network weights $\boldsymbol \omega$. Consequently, $\boldsymbol \theta = (\boldsymbol \omega, \boldsymbol \xi)$ constitutes the true learnable parameters from the GNDR framework. Before formalizing the estimation of these components, we must address a fundamental geometric tension regarding the identifiability of the model, starting with the following definition.
\begin{definition}
	\label{def:identifiable_model}
	Let $Y \sim \mathcal{D}(\boldsymbol{\phi})$ be a random variable governed by a probability density (or mass) function $f(y; \boldsymbol{\phi})$. The distributional family $\mathcal{D}$ is structurally identifiable if, for any two parameter vectors $\boldsymbol{\phi}_1$ and $\boldsymbol{\phi}_2$ in the parameter space, $f(y; \boldsymbol{\phi}_1) = f(y; \boldsymbol{\phi}_2)$ for almost all $y$ implies $\boldsymbol{\phi}_1 = \boldsymbol{\phi}_2$.
\end{definition}
The condition in Definition \ref{def:identifiable_model} is necessary for classical statistical inference problem, and we assume it holds for the chosen base distribution $\mathcal{D}$. However, deep neural networks are inherently non-identifiable. Due to weight permutation symmetries (node interchangeability) and activation function scaling invariances, the mapping $\boldsymbol{\omega} \mapsto \mathcal{F}(\boldsymbol x; \boldsymbol \omega)$ is structurally non-injective \citep{yang2008structural, pourzanjani2017improving, vlavcic2022neural, bona2023parameter}. Therefore, even when $\mathcal{D}$ is identifiable with respect to $\boldsymbol \phi$, the composite mapping $\boldsymbol \theta \mapsto f(y; \boldsymbol{\phi}(\boldsymbol \theta))$ remains globally non-identifiable.

From an inferential perspective, this parameter space non-injectivity guarantees that the full Fisher Information Matrix (FIM) with respect to the complete learnable vector $\boldsymbol \theta$ will be singular, or at best, highly ill-conditioned. A singular FIM violates the regularity conditions required for standard asymptotic normality, thereby invalidating standard maximum likelihood uncertainty quantification for the deep weights. To resolve this, the GNDR framework isolates the likelihood optimization through a strictly semi-parametric paradigm. Specifically, we partition the neural weights such that $\boldsymbol{\omega} = \{\boldsymbol{\omega}_{\text{deep}}, \boldsymbol{\omega}_{\text{last}}\}$. Alongside the classical fixed effects $\boldsymbol{\xi}$, only the weights corresponding to the terminal layer of $\mathcal{F}$, denoted as $\boldsymbol{\omega}_{\text{last}}$, are treated as formal statistical parameters subject to rigorous maximum likelihood inference. All preceding internal weights, $\boldsymbol{\omega}_{\text{deep}}$, are relegated to the status of nuisance quantities, optimized strictly for non-linear feature extraction. This second training step is closely related to the known Last-Layer Laplace Approximation (LLLA) applied in the Bayesian Neural Networks Literature \citep{daxberger2021laplace}. By treating the deep network as a deterministic, learned basis expansion, this targeted bifurcation requires a specialized two-step estimation procedure, which is formalized in the subsequent section.

\section{Parameter Estimation and Goodness-of-Fit}
\label{sec:parameter_estimation}

\subsection{Parameter estimation}

Let $\mathcal{S} = \{(y_i, \boldsymbol x_i, \boldsymbol z_i)\}_{i=1}^n$ represent a sample of $n$ independent observations. Each response $y_i \sim \mathcal{D}(\boldsymbol \phi_i)$ is governed by an observation-specific distributional $p$-dimensional parameter vector $\boldsymbol \phi_i = (\phi_{1i}, \cdots, \phi_{pi})^\top$, for $i = 1, \cdots, n$. For each observation, $\boldsymbol x_i$ denotes the covariate tensor processed by the neural network, and $\boldsymbol z_i = (\boldsymbol z_{1i}, \cdots, \boldsymbol z_{pi})$ denotes the collection of covariate tensors associated with the classical components.

Under the GNDR framework, the native parameters $\boldsymbol \phi_i$ are deterministically mapped from the learnable components $\boldsymbol \omega$ and $\boldsymbol \xi$. Assuming that distribution $\mathcal{D}$ admits a closed-form probability density or mass function $f$, the conditional log-likelihood of the sample is given by:
\begin{equation}
	\ell(\boldsymbol \omega, \boldsymbol \xi \mid \boldsymbol y, \boldsymbol X, \boldsymbol Z) = \sum_{i=1}^n \log f(y_i \mid \boldsymbol \phi_i(\boldsymbol x_i, \boldsymbol z_i; \boldsymbol \omega, \boldsymbol \xi)).
\end{equation}
Under classical regularity conditions, estimation proceeds by obtaining the maximum likelihood estimators (MLE):
\begin{equation}(\widehat{\boldsymbol \omega}, \widehat{\boldsymbol \xi}) = \underset{\boldsymbol \omega, \boldsymbol \xi}{\arg\max} \ \ell(\boldsymbol \omega, \boldsymbol \xi \mid \boldsymbol y, \boldsymbol X, \boldsymbol Z).
\end{equation}
However, given the identifiability problem pointed in Section \ref{sec:model_identifiability}, the log-likelihood surface with respect to the deep weights $\boldsymbol \omega$ is notoriously non-convex, presenting local optima and saddle points. Consequently, the existence of a unique global maximum is neither theoretically guaranteed nor computationally attainable. Furthermore, standard second-order root-finding algorithms (e.g., Newton-Raphson or Fisher scoring) are mathematically intractable and prone to divergence, given the number of dimensions. 

To circumvent this, the empirical estimation of the GNDR framework relies on first-order, mini-batch stochastic optimization. The learnable parameters are updated iteratively using adaptive gradient algorithms, such as Adam \citep{kingma2014adam} or AdamW \citep{loshchilov2017fixing}, leveraging automatic differentiation natively supported by computational backends like TensorFlow \citep{tensorflow2015-whitepaper}. To ensure robust convergence and prevent overfitting to the training sample, this optimization is coupled with standard deep learning heuristics, including dynamic learning rate decay and early stopping based on validation metrics.

Let $(\widetilde{\boldsymbol{\omega}}, \widetilde{\boldsymbol{\xi}})$ denote the final point estimates obtained upon convergence of this initial training step. These estimates are typically obtained via early stopping, where the optimization trajectory is halted based on the maximization of a hold-out validation likelihood to prevent overfitting \citep{bishop2006pattern}. However, from the perspective of classical statistical inference, this creates a fundamental theoretical discrepancy. Standard maximum likelihood asymptotic theory assumes the estimator is a strict root of the score equation derived from the training sample. Furthermore, as established in Section \ref{sec:model_identifiability}, the non-identifiability of the deep weights ensures that even if a true local maximum of the training likelihood were reached, the requisite regularity conditions for the asymptotic normality of $(\widetilde{\boldsymbol{\omega}}, \widetilde{\boldsymbol{\xi}})$ would remain severely violated.

To reconcile this highly parameterized optimization with classical inferential theory, we introduce a targeted second training step. We restrict the parameter space by permanently freezing all internal layers of the neural network architecture $\mathcal{F}$ at their estimated values, $\widetilde{\boldsymbol{\omega}}_{\text{deep}}$. Under this restriction, the deep layers cease to be statistical parameters and instead function as a deterministic, fixed basis expansion mapping the unstructured data to a dense latent space. The only neural weights permitted to update are those in the terminal output layer, $\boldsymbol{\omega}_{\text{last}}$ and the unconstrained global parameters, $\boldsymbol \xi$.

By freezing the internal architecture, the model is mathematically transformed into a strictly linear Vector Generalized Additive Model (VGAM; \citet{yee1996vector}) constructed over a learned feature space. To formalize this, let $\boldsymbol{v}(\boldsymbol x_i) \coloneqq \mathcal{F}_{\text{deep}}(\boldsymbol{x}_i; \widetilde{\boldsymbol{\omega}}_{\text{deep}})$ represent the $M$-dimensional vector of latent activations output by the final hidden layer for the $i$-th observation, where $M$ is the number of neurons in that layer. Let $\boldsymbol{W}_{\text{last}}$ represent the $M \times p_\mathcal{F}$ weight matrix of the terminal dense layer, and $\boldsymbol{b}_{\text{last}}$ be its corresponding $p_\mathcal{F}$-dimensional bias vector, such that $\boldsymbol{\omega}_{\text{last}} = (\boldsymbol{W}_{\text{last}}, \boldsymbol{b}_{\text{last}})$. Here, $p_{\mathcal{F}} \leq p$ is the effective number of distributional parameters $\phi_k$ modeled with an active neural component ($\mathcal{F}_k \not\equiv 0$). For the $i$-th observation, the neural contribution to the unconstrained predictors can be expressed via exact linear algebra:
\begin{equation}
	\mathcal{F}(\boldsymbol x_i ; \boldsymbol \omega) = \left(\boldsymbol W_\text{last}\right)^\top \boldsymbol v(\boldsymbol x_i) + \boldsymbol b_\text{last}.
	\label{eq:last_layer_algebra}
\end{equation}
Substituting (\ref{eq:last_layer_algebra}) back into our generalized additive formulation in (\ref{eq:gndr_parameters_structure}), each predictor $\eta_{ki}$ becomes a strictly classical function parameterized by the identifiable, $q$-dimensional, reduced vector of parameters
\begin{equation}
	\boldsymbol \theta_\text{last} = \left(\boldsymbol \xi^\top, \operatorname{vec}(\boldsymbol{W}_{\text{last}})^\top, \boldsymbol b_{\text{last}}^\top\right)^\top,
	\label{eq:theta_last_definition}
\end{equation}
where $\operatorname{vec}(\cdot)$ denotes the column-wise vectorization operator. Here, $q = b + M p_{\mathcal{F}} + p_{\mathcal{F}}$, where $b = \dim(\boldsymbol \xi)$, represents the total number of free parameters in $\boldsymbol \theta_\text{last}$, where $M$ is the fixed dimension of the latent space $\boldsymbol{v}(\boldsymbol x_i)$. Crucially, for the Fisher Information Matrix with respect to $\boldsymbol{\theta}_\text{last}$ to be full rank, the column space of the learned design matrix generated by the latent vectors $\boldsymbol{v}(\boldsymbol x_i)$ must not be perfectly collinear with the structured design matrix generated by $\mathcal{G}(\boldsymbol{z}_k; \boldsymbol{\xi})$. This strictly algebraic requirement reinforces the necessity of the disjoint covariate space condition established in Section \ref{sec:model_identifiability}; without it, structural identifiability vanishes within the additive composition.

With the deep feature extractor $\widetilde{\boldsymbol{\omega}}_{\text{deep}}$ permanently frozen, we initialize the second training phase, or fine-tuning step, to estimate the restricted parameter vector $\boldsymbol{\theta}_\text{last}$. Because the model has mathematically collapsed into a finite-dimensional, identifiable VGLM, the optimization paradigm shifts from machine learning heuristics back to classical statistical estimation. Specifically, we abandon mini-batching by considering a full gradient approach and remove validation-based early stopping. Instead, we optimize the restricted log-likelihood strictly over the training sample until full convergence. By driving the optimization to a strict stationary point rather than prematurely halting the trajectory, we ensure that the resulting maximum likelihood estimator, $\widehat{\boldsymbol{\theta}}_\text{last}$, is an exact root of the restricted training score function. Consequently, the empirical score vector evaluates to exactly zero, satisfying the classical regularity conditions required to derive the asymptotic distribution of the model parameters in the subsequent section.

It is necessary to acknowledge that driving the unpenalized log-likelihood to a classical maximum reintroduces the risk of overfitting. However, because inference is now restricted to a classical VGLM, the severity of this risk is governed strictly by the degrees of freedom consumed by $\boldsymbol{\theta}_\text{last}$, which depend directly on the width, M, of the final hidden layer. To prevent variance inflation in the resulting estimators and ensure robust out-of-sample generalization, it is highly recommended to design the neural architecture $\mathcal{F}$ such that the terminal hidden layer acts as an aggressive, parsimonious bottleneck. In empirical practice, restricting this latent dimension to a minimal integer (e.g., $M \le 4$) provides a sufficiently flexible basis expansion for most distributional mappings while preserving the statistical degrees of freedom required for stable parameter inference.

\subsection{Asymptotic theory}
\label{sec:asymptotic_theory}

Let $\widehat{\boldsymbol \theta}_\text{last}$ denote the flattened, $q$-dimensional vector of the maximum likelihood estimates obtained from the second training step.
%\begin{equation}
%	\widehat{\boldsymbol \theta}_\text{last} = \left(\widehat{\boldsymbol \xi}^\top, \operatorname{vec}(\widehat{\boldsymbol{W}}_{\text{last}})^\top, \widehat{\boldsymbol b}_{\text{last}}^\top\right)^\top,
%	\label{eq:theta_last_definition}
%\end{equation}
We can finally establish the formal asymptotic properties for this conditional estimator.

\begin{theorem}[Conditional Asymptotic Normality of the GNDR Estimator]\label{thm:asymptotic_normality}
	Consider the restricted, two-step estimation procedure defined in Section \ref{sec:parameter_estimation}. Under standard maximum likelihood regularity conditions for Vector Generalized Linear Models, and conditional on the fixed latent feature column-vector $\boldsymbol{v}(\boldsymbol x)$, as the sample size $n \to \infty$:
	\begin{equation}
		\sqrt{n} \left(\widehat{\boldsymbol \theta}_\text{last} - \boldsymbol \theta_\text{last}^{(0)} \right) \xlongrightarrow{d} \mathcal{N}_q\left(\mathbf{0}, \boldsymbol{\Sigma}_\text{last}\right),
		\label{eq:asymptotic_distribution_theta_last}
	\end{equation}
	where $\boldsymbol \theta_\text{last}^{(0)}$ represents the true population parameter vector under the assumed GNDR data-generating process. Furthermore, $\boldsymbol{\Sigma}_\text{last} = \mathcal{I}^{-1}(\boldsymbol \theta_\text{last}^{(0)})$ is the $q \times q$ asymptotic covariance matrix, defined as the inverse of the expected FIM evaluated at the true parameter.
\end{theorem}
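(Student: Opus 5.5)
The plan is to reduce the statement to the classical asymptotic theory of M-estimators for independent, non-identically distributed observations. The key device is the conditioning in the hypothesis. Once the deep weights are frozen at $\widetilde{\boldsymbol\omega}_{\text{deep}}$, each $\boldsymbol v(\boldsymbol x_i)$ is a fixed regressor vector. We must assume, and will state explicitly, that $\widetilde{\boldsymbol\omega}_{\text{deep}}$ does not depend on the responses used in the second step (for instance, because it was fitted on an independent split) or is otherwise treated as fixed. Under that assumption, the restricted model is a VGLM with linear predictors $\eta_{ki} = \mathcal{G}_k(\boldsymbol z_{ki};\boldsymbol\xi) + \boldsymbol w_k^\top \boldsymbol v(\boldsymbol x_i) + b_k$. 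Its log-likelihood $\ell_n(\boldsymbol\theta_\text{last}) = \sum_i \log f(y_i \mid \boldsymbol\phi_i(\boldsymbol\theta_\text{last}))$ is a sum of independent terms that are twice continuously differentiable in $\boldsymbol\theta_\text{last}$, by the chain rule through $g_k^{-1}$, which is monotone and smooth, and through the twice-differentiable $\mathcal{G}_k$. We then proceed by the usual three steps: consistency, a Taylor expansion of the score, and a CLT with a law of large numbers.

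\emph{Step 1 (score and information).} Write $\boldsymbol U_n(\boldsymbol\theta) = \sum_i \partial_{\boldsymbol\theta}\log f_i$ and $\boldsymbol J_n(\boldsymbol\theta) = -\sum_i \partial^2_{\boldsymbol\theta}\log f_i$. By the chain rule, $\partial_{\boldsymbol\theta}\log f_i = \boldsymbol D_i^\top \boldsymbol s_i$. Here $\boldsymbol s_i = \partial_{\boldsymbol\eta}\log f(y_i\mid\boldsymbol\phi_i)$ is the $p$-vector of scores in the predictor scale. $\boldsymbol D_i = \partial\boldsymbol\eta_i/\partial\boldsymbol\theta^\top$ is the $p\times q$ design block, whose neural columns are $\boldsymbol v(\boldsymbol x_i)\otimes$ unit vectors together with the biases. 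Under the regularity conditions (differentiation under the integral sign), $E[\boldsymbol s_i]=\mathbf 0$ at $\boldsymbol\theta^{(0)}_\text{last}$. We define $\mathcal I(\boldsymbol\theta^{(0)}_\text{last}) = \lim_n n^{-1}\sum_i \boldsymbol D_i^\top E[\boldsymbol s_i\boldsymbol s_i^\top]\boldsymbol D_i$, which equals $\lim n^{-1}E[\boldsymbol J_n]$ by the information identity. Positive definiteness of this limit is exactly the non-collinearity condition of Section \ref{sec:parameter_estimation}: the columns generated by $\boldsymbol v(\boldsymbol x_i)$ are not linearly dependent on those from $\mathcal G_k$, which is secured by the disjoint covariate condition of Section \ref{sec:model_identifiability}. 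Together with identifiability of $\mathcal D$ (Definition \ref{def:identifiable_model}), this gives local identifiability of $\boldsymbol\theta_\text{last}$.

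\emph{Step 2 (consistency and expansion).} Consistency of $\widehat{\boldsymbol\theta}_\text{last}$ follows from the standard argument for a root of the likelihood equations. The key ingredients are a uniform law of large numbers for $n^{-1}\ell_n$ on a neighbourhood, which we assume as part of the regularity conditions (e.g.\ compact parameter neighbourhood, bounded $\boldsymbol v$, dominated second derivatives), and the unique maximization of the limit at $\boldsymbol\theta^{(0)}_\text{last}$, which comes from identifiability plus positive definite $\mathcal I$. Since the second step drives the optimization to full convergence, $\boldsymbol U_n(\widehat{\boldsymbol\theta}_\text{last}) = \mathbf 0$ exactly. A mean-value expansion then gives
\begin{equation*}
\sqrt n(\widehat{\boldsymbol\theta}_\text{last}-\boldsymbol\theta^{(0)}_\text{last}) = \Big[n^{-1}\boldsymbol J_n(\bar{\boldsymbol\theta})\Big]^{-1} n^{-1/2}\boldsymbol U_n(\boldsymbol\theta^{(0)}_\text{last}),
\end{equation*}
with $\bar{\boldsymbol\theta}$ on the segment between the two points, applied row by row.

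\emph{Step 3 (limits).} For the score, we apply the Lindeberg--Feller CLT (or Lyapunov with a $2+\delta$ moment bound on $\boldsymbol s_i$ and bounded $\boldsymbol D_i$) to the independent, non-identical summands $\boldsymbol D_i^\top\boldsymbol s_i$. Combined with the Cram\'er--Wold device, this gives $n^{-1/2}\boldsymbol U_n \to \mathcal N_q(\mathbf 0,\mathcal I)$. For the Hessian, a LLN together with equicontinuity of the second derivatives near $\boldsymbol\theta^{(0)}_\text{last}$ and consistency gives $n^{-1}\boldsymbol J_n(\bar{\boldsymbol\theta}) \to \mathcal I$ in probability. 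Slutsky's theorem then yields the limit $\mathcal N_q(\mathbf 0,\mathcal I^{-1}\mathcal I\mathcal I^{-1}) = \mathcal N_q(\mathbf 0,\mathcal I^{-1})$.

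The main obstacle is Step 2's conditioning issue rather than the algebra. If $\widetilde{\boldsymbol\omega}_{\text{deep}}$ was learned from the same responses, then $\boldsymbol v(\boldsymbol x_i)$ depends on $y_i$, and $E[\boldsymbol s_i \mid \boldsymbol v]=\mathbf 0$ can fail. I would handle this by making the conditioning precise: define $\boldsymbol\theta^{(0)}_\text{last}$ as the true parameter of the GNDR process given the frozen basis, as the statement does, and either assume sample splitting or treat the basis as ancillary. The remaining technical point is to verify the Lindeberg condition and the existence of the limit $\mathcal I$, which I would impose via boundedness of $\boldsymbol v(\boldsymbol x)$, which holds for bounded activations, and convergence of the empirical design moments.
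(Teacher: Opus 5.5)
Your argument is correct and rests on the same reduction as the paper's proof: once $\widetilde{\boldsymbol\omega}_{\text{deep}}$ is frozen, the second stage is a fixed-dimensional VGLM in $\boldsymbol\theta_\text{last}$, and classical likelihood theory applies.

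The difference lies in what is proved versus cited. The paper's appendix checks Yee's regularity conditions (fixed dimension, identifiability via a full-rank design, parameter-free support, open parameter space $\mathbb{R}^q$, smoothness and interchange of differentiation and integration) and then invokes the standard MLE theorem. You instead run that theorem yourself for independent, non-identically distributed observations with fixed regressors: chain-rule score $\boldsymbol D_i^\top \boldsymbol s_i$, uniform LLN plus identification, exact score root, row-wise mean-value expansion, Lindeberg--Feller with Cram\'er--Wold, LLN for the Hessian, and Slutsky.

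The paper's route buys brevity. Yours exposes what the citation hides:
\begin{itemize}
\item the limiting \emph{average} information must exist and be positive definite, which is what $\mathcal{I}$ must mean for the $\sqrt{n}$ scaling to be right;
\item a Lindeberg or moment condition is needed;
\item second derivatives must be dominated and equicontinuous, which needs only $C^2$, whereas the paper's checklist asks for three derivatives but justifies only twice-differentiability.
\end{itemize}

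Your most substantive addition is the data-reuse issue, which the paper's proof passes over. In the paper's procedure the same training responses fit both stages, so $\boldsymbol v(\boldsymbol x_i)$ depends on $y_i$ and $E[\boldsymbol s_i \mid \boldsymbol v] = \mathbf{0}$ can fail. What actually licenses the conditional statement is your sample-splitting or ancillarity assumption, together with correct specification given the frozen basis. Without the latter you would get a sandwich covariance around a pseudo-true value.

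One step you inherit from the paper should be weakened: disjoint covariate spaces do not secure full column rank. An intercept in $\mathcal{G}_k$ and the terminal bias $b_k$ for the same $k$ give identical columns of $\boldsymbol D_i$ regardless of disjointness. Likewise, a saturated or dead hidden unit (constant $v_j$) is collinear with $b_k$. Positive definiteness of the limiting information must therefore be imposed as an assumption, not derived from Section \ref{sec:model_identifiability}.

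Finally, in both arguments the uniform-LLN step gives the existence of a consistent root (or consistency of the global maximizer). It does not show that whatever stationary point the optimizer reaches is consistent, unless the restricted log-likelihood is concave or the starting point is assumed to lie in the relevant neighbourhood.
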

The proof of Theorem \ref{thm:asymptotic_normality} is provided in Appendix \ref{appendix}. In empirical applications, the exact analytical form of the expected Fisher Information Matrix $\mathcal{I}(\boldsymbol \theta_\text{last})$ is generally intractable for arbitrary choices of the base distribution $\mathcal{D}$ and the learned latent representations $\boldsymbol{v}(\boldsymbol x)$. Therefore, the asymptotic covariance matrix $\boldsymbol{\Sigma}_\text{last}$ is approximated using the inverse of the observed FIM, derived from the sample log-likelihood Hessian evaluated at the maximum likelihood estimate:
\begin{equation}
	\widehat{\boldsymbol{\Sigma}}_\text{last} = \left( - \nabla^2 \ell(\widehat{\boldsymbol \theta}_\text{last} \mid \boldsymbol y, \boldsymbol X, \boldsymbol Z) \right)^{-1}.
	\label{eq:covariance_last_layer}
\end{equation}
Because the GNDR framework natively maps the parameter transformations through a strictly defined computational graph, this $q \times q$ Hessian matrix can be computed analytically with machine precision via the automatic differentiation engines (autograd) provided by modern computational backends such as TensorFlow.

Following the block concatenation of the parameter vector defined previously, the diagonal elements of $\widehat{\boldsymbol{\Sigma}}_\text{last}$ yield the estimated asymptotic variances for the identifiable model parameters. Specifically, the first $b$ diagonal entries correspond to the structured fixed effects $\widehat{\boldsymbol \xi}$, the subsequent $M p_\mathcal{F}$ entries correspond to the vectorized terminal weights $\operatorname{vec}(\widehat{\boldsymbol{W}}_{\text{last}})$, and the final $p_\mathcal{F}$ entries correspond to the terminal bias vector $\widehat{\boldsymbol b}_{\text{last}}$.

After obtaining $\widehat \Sigma_\text{last}$, deriving the covariance structure for the final parameters of interest becomes a direct application of linear algebra. Recalling Equation (\ref{eq:gndr_parameters_structure}), the neural network component $\mathcal{F}$ was initially defined with $p$ outputs. However, since the architecture permits $\mathcal{F}_k \equiv 0$ for arbitrary parameters $\phi_k$ (see Examples 1, 2 and 3 above), $\mathcal{F}$ can be strictly represented as a network yielding exactly $p_\mathcal{F}$ active outputs from its terminal dense layer, governed by a linear activation function.

Let $\boldsymbol \varphi(\boldsymbol x) = (\varphi_1(\boldsymbol x), \cdots, \varphi_{p_\mathcal{F}}(\boldsymbol x))^\top$ denote this active neural predictor vector, where $\varphi_k(\boldsymbol x) = \mathcal{F}_k(\boldsymbol x ; \boldsymbol \omega)$. We can aggregate these active components alongside the structured effects, $\boldsymbol \xi$, to form an observation-specific intermediate predictor vector:
\begin{equation}
	\boldsymbol \theta(\boldsymbol x) = (\boldsymbol \xi^\top, \boldsymbol \varphi(\boldsymbol x)^\top)^\top.
\end{equation}
Using the algebraic expansion from (\ref{eq:last_layer_algebra}) and the parameter vector definition in (\ref{eq:theta_last_definition}), this intermediate vector can be seen simply as a data-dependent linear transformation of the parameter space:
\begin{equation}
	\boldsymbol \theta(\boldsymbol x) = \boldsymbol P(\boldsymbol x) \boldsymbol \theta_\text{last},
	\label{eq:parameters_projection}
\end{equation}
where $\boldsymbol P(\boldsymbol x)$ is a block-diagonal projection matrix of dimension $(b + p_\mathcal{F}) \times q$, defined as:
\begin{equation*}
	\boldsymbol P(\boldsymbol x) = \begin{bmatrix}
		\boldsymbol I_b & \boldsymbol 0 & \boldsymbol 0\\
		\boldsymbol 0 & \boldsymbol I_{p_\mathcal{F}} \otimes \boldsymbol v(\boldsymbol x)^\top & \boldsymbol I_{p_\mathcal{F}}
	\end{bmatrix},
\end{equation*}
where $\otimes$ stands for the standard Kronecker product. Building upon Theorem \ref{thm:asymptotic_normality}, the asymptotic normality of $\widehat{\boldsymbol \theta}_\text{last}$ combined with the affine transformation described previously allows us to trivially establish the conditional asymptotic distribution of the intermediate predictor vector. By standard multivariate properties, we obtain:
\begin{equation}
	\sqrt{n} \left(\widehat{\boldsymbol \theta}(\boldsymbol x) - \boldsymbol \theta^{(0)}(\boldsymbol x) \right) \xlongrightarrow{d} \mathcal{N}_{b+p_{\mathcal{F}}}\left(\mathbf{0}, \Sigma_{\boldsymbol \theta}(\boldsymbol x)\right),
	\label{eq:asymptotic_distribution_theta}
\end{equation}
where $\boldsymbol \theta^{(0)}(\boldsymbol x) = \boldsymbol P(\boldsymbol x) \boldsymbol \theta_\text{last}^{(0)}$, and the observation-specific intermediate covariance is given by $\boldsymbol{\Sigma}_{\boldsymbol \theta}(\boldsymbol x) = \boldsymbol P(\boldsymbol x) \boldsymbol{\Sigma}_\text{last} \boldsymbol P(\boldsymbol x)^\top$.

With this intermediate result established, we possess the necessary tools to derive the asymptotic distribution for the unconstrained additive predictors $\boldsymbol \eta = (\eta_1, \dots, \eta_p)^\top$ from \eqref{eq:gndr_parameters_structure}. Let $\boldsymbol{z} = (\boldsymbol z_1, \cdots, \boldsymbol z_p)$ denote the collection of structured covariates for a given observation. From Equation (\ref{eq:gndr_parameters_structure}) and the definition of the active neural components $\boldsymbol \varphi(\boldsymbol x)$, we can define the mapping to the $k$-th unconstrained predictor as:
\begin{equation}
	g_k(\phi_k) = \eta_k = \mathcal{G}_k(\boldsymbol z_k; \boldsymbol \xi) + \varphi_k(\boldsymbol x) \coloneqq h_k(\boldsymbol \theta(\boldsymbol x), ; \boldsymbol z_k).
	\label{eq:new_gndr_parameters_structure}
\end{equation}
We can encapsulate these element-wise transformations into a single, $p$-dimensional vector-valued function, denoted $\boldsymbol h(\boldsymbol \theta(\boldsymbol x); \boldsymbol z) = (h_1(\boldsymbol \theta(\boldsymbol x); \boldsymbol z_1), \cdots, h_p(\boldsymbol \theta(\boldsymbol x); \boldsymbol z_p))^\top$. Because the classical components $\mathcal{G}_k$ may be arbitrary non-linear functions of the fixed effects $\boldsymbol{\xi}$, $\boldsymbol{h}$ is a generally non-linear, continuously differentiable mapping. Also, note that for parameters, $\phi_k$, whose neural network component is null ($\mathcal{F}_k \equiv 0$), $h_k$ becomes a function of only $\boldsymbol \xi$ and $\boldsymbol z$.

Let $\boldsymbol{J}_{\boldsymbol h}(\boldsymbol \theta(\boldsymbol x); \boldsymbol{z})$ be the $p \times (b + p_{\mathcal{F}})$ Jacobian matrix of partial derivatives of $\boldsymbol{h}$ with respect to $\boldsymbol{\theta}$, evaluated at $\boldsymbol \theta(\boldsymbol x)$. By applying the multivariate Delta method, we obtain the asymptotic distribution for the estimated unconstrained predictor vector $\widehat{\boldsymbol{\eta}}$:
\begin{equation}
	\sqrt{n} \left(\widehat{\boldsymbol \eta} - \boldsymbol \eta^{(0)} \right) \xlongrightarrow{d} \mathcal{N}_{p}\left(\mathbf{0}, \Sigma_{\boldsymbol \eta}\right),
	\label{eq:asymptotic_distribution_eta}
\end{equation}
where the true unconstrained vector is $\boldsymbol \eta^{(0)} = \boldsymbol h\left(\boldsymbol \theta^{(0)}(\boldsymbol x); \boldsymbol z\right)$, and the final asymptotic covariance matrix is analytically defined as:
\begin{equation}
	\begin{split}
		\boldsymbol{\Sigma}_{\boldsymbol \eta}(\boldsymbol x, \boldsymbol z) &= \boldsymbol{J}_{\boldsymbol h}(\boldsymbol \theta^{(0)}(\boldsymbol x); \boldsymbol{z}) \boldsymbol{\Sigma}_{\boldsymbol \theta}(\boldsymbol x) \boldsymbol{J}_{\boldsymbol h}(\boldsymbol \theta^{(0)}(\boldsymbol x); \boldsymbol{z})^\top.\\
		%	&\approx \boldsymbol{J}_{\boldsymbol h}(\widehat{\boldsymbol \theta}(\boldsymbol x); \boldsymbol{z}) \boldsymbol{\Sigma}_{\boldsymbol \theta}(\boldsymbol x) \boldsymbol{J}_{\boldsymbol h}(\widehat{\boldsymbol \theta}(\boldsymbol x); \boldsymbol{z})^\top.
	\end{split}
\end{equation}
In practice, the Jacobian matrix $\boldsymbol{J}_{\boldsymbol h}$ can be computed effortlessly with machine precision via the automatic differentiation engine. Deriving the explicit covariance matrix for the $p$ unconstrained predictors $\boldsymbol{\eta}$ is particularly advantageous for constructing confidence intervals that strictly respect the topological boundaries of the parameter space. Suppose we require a $(1-\alpha)$ confidence interval for a specific distributional parameter $\phi_k$. Leveraging the marginal asymptotic normality of $\widehat{\eta}_k$, the symmetric confidence bounds on the unconstrained scale are given by:
\begin{equation*}
	\widehat{\eta}_k \pm z_{1-\alpha/2} \sqrt{ \left( \boldsymbol{\Sigma}_{\boldsymbol \eta}(\boldsymbol x, \boldsymbol z) \right)_{kk} },
\end{equation*}
where $z_{1-\alpha/2}$ denotes the $(1-\alpha/2)$ quantile of the standard normal distribution. By mapping these unconstrained bounds through the strictly monotonic inverse link function $g_k^{-1}$, we recover a valid, typically asymmetric confidence interval for $\phi_k$ that naturally satisfies its domain constraints (e.g., strictly positive variances or bounded probabilities).

If the analytical objective is to report the asymptotic standard errors of the native parameters themselves, one can apply the multivariate Delta method a final time. We define the vector mapping from the unconstrained predictor space to the native parameter space as $\boldsymbol{g}^{-1}(\boldsymbol \eta) = \left(g_1^{-1}(\eta_1), \dots, g_p^{-1}(\eta_p)\right)^\top$. Because each inverse link function operates independently on its corresponding predictor, its associated $p \times p$ Jacobian matrix, denoted $\boldsymbol{J}_{\boldsymbol{g}^{-1}}$, is strictly diagonal and readily obtained. The final asymptotic distribution for the constrained parameter vector is thus:
\begin{equation}
	\sqrt{n} \left(\widehat{\boldsymbol \phi} - \boldsymbol \phi^{(0)} \right) \xlongrightarrow{d} \mathcal{N}_{p}\left(\mathbf{0}, \boldsymbol{\Sigma}_{\boldsymbol \phi}(\boldsymbol x, \boldsymbol z)\right),
	\label{eq:asymptotic_distribution_phi}
\end{equation}
with the theoretical asymptotic covariance matrix analytically defined as:
\begin{equation*}
	\begin{split}
		\boldsymbol{\Sigma}_{\boldsymbol \phi}(\boldsymbol x, \boldsymbol z) &= \boldsymbol{J}_{\boldsymbol{g}^{-1}}(\boldsymbol \eta^{(0)}) \boldsymbol{\Sigma}_{\boldsymbol \eta}(\boldsymbol x, \boldsymbol z) \boldsymbol{J}_{\boldsymbol{g}^{-1}}(\boldsymbol \eta^{(0)})^\top.\\
		%	&\approx \boldsymbol{J}_{\boldsymbol{g}^{-1}}(\widehat{\boldsymbol \eta}) \boldsymbol{\Sigma}_{\boldsymbol \eta}(\boldsymbol x, \boldsymbol z) \boldsymbol{J}_{\boldsymbol{g}^{-1}}(\widehat{\boldsymbol \eta})^\top.
	\end{split}
\end{equation*}
This theoretical pipeline successfully quantifies localized, observation-specific uncertainty for any base distribution $\mathcal{D}$, entirely independent of the non-linear complexity or unstructured nature of the underlying data tensor $\boldsymbol{x}$.

Furthermore, establishing the asymptotic distribution of the native parameters seamlessly facilitates inference on arbitrary transformations of the parameter space. Let $\boldsymbol{f}: \mathbb{R}^p \to \mathbb{R}^q$ be a continuously differentiable vector-valued function representing a specific quantity of interest. Applying the multivariate Delta method once more to the result in \eqref{eq:asymptotic_distribution_phi} yields:
\begin{equation}
	\sqrt{n} \left( f(\widehat{\boldsymbol \phi}) - f( \boldsymbol \phi^{(0)} ) \right) \xlongrightarrow{d} \mathcal{N}_{p}\left(\mathbf{0}, \boldsymbol{\Sigma}_{f}(\boldsymbol x, \boldsymbol z)\right),
	\label{eq:asymptotic_distribution_f_phi}
\end{equation}
where the asymptotic covariance matrix of the transformed parameters is given by:
\begin{equation*}
	\begin{split}
		\boldsymbol \Sigma_f(\boldsymbol x) &= \boldsymbol J_{\boldsymbol f}(\boldsymbol \phi^{(0)}) \boldsymbol \Sigma_{\boldsymbol \phi}(\boldsymbol x, \boldsymbol z) \boldsymbol J_{\boldsymbol f}(\boldsymbol \phi^{(0)})^\top,
		%	&\approx \boldsymbol J_{\boldsymbol f}(\widehat{\boldsymbol \phi}) \boldsymbol \Sigma_{\boldsymbol \phi}(\boldsymbol x, \boldsymbol z) \boldsymbol J_{\boldsymbol f}(\widehat{\boldsymbol \phi})^\top,
	\end{split}
\end{equation*}
with $\boldsymbol{J}_{\boldsymbol f}$ denoting the $q \times p$ Jacobian matrix of $\boldsymbol{f}$. In practice, the empirical covariance estimators $\widehat{\boldsymbol{\Sigma}}_{\boldsymbol{\eta}}$, $\widehat{\boldsymbol{\Sigma}}_{\boldsymbol{\phi}}$ and $\widehat{\boldsymbol{\Sigma}}_{\boldsymbol{f}}$ are obtained by evaluating their respective Jacobian matrices at the maximum likelihood estimates $\widehat{\boldsymbol{\theta}}$, $\widehat{\boldsymbol{\eta}}$ and $\widehat{\boldsymbol{\phi}}$, respectively.

\subsection{GNDR Residual Analysis and Model Selection}
\label{sec:residual_analysis}

In the foundational GAMLSS framework, model selection and goodness-of-fit are traditionally assessed using penalized likelihood metrics, such as the Akaike and Bayesian Information Criteria (AIC and BIC), Global Deviance (GD), and the Generalized AIC \citep{akaike1983information, rigby2005generalized}. To evaluate the adequacy of the assumed response distribution, the framework relies heavily on the normalized quantile residuals introduced by \citet{dunn1996randomized}, which are typically visualized using standard Q-Q plots or worm plots \citep{buuren2001worm}.

However, the direct application of classical penalized likelihood criteria is theoretically problematic within the GNDR framework for two fundamental reasons. First, due to the ultra-high dimensionality of the neural network component and the inherent non-identifiability of the deep weights, the concept of model complexity cannot be captured by a simple parameter count, rendering classical penalty terms invalid. Second, traditional diagnostic statistics inherently evaluate the penalized likelihood exclusively over the training set. In highly flexible, parameterized environments, relying on in-sample evaluation virtually guarantees severe overfitting. Robust model selection must ideally be applied to a strictly disjoint, previously unseen test set. Consequently, we strongly advise against using classical AIC, BIC, or global deviance to evaluate or compare GNDR architectures, except they are comparing the exact same distribution, as it happens in cross validation, for example.

Instead, we adopt normalized quantile residuals evaluated on the hold-out test set as the primary diagnostic tool for assessing goodness-of-fit, as their theoretical validity remains intact regardless of the model's internal structural complexity. This approach relies on the Probability Integral Transform (PIT). Assuming a continuous response $y_i$ is generated from the predicted observation-specific distribution $\mathcal{D}(\boldsymbol{\phi}_i)$ with cumulative distribution function $F_{\mathcal{D}_i}$, the transformed variable $u_i = F_{\mathcal{D}_i}(y_i \mid \widehat{\boldsymbol{\phi}}_i)$ is uniformly distributed on the interval $(0, 1)$. Applying the inverse standard normal cumulative distribution function yields the normalized quantile residuals, $R_i = \Phi^{-1}(u_i)$, which asymptotically follow a standard normal distribution, $\mathcal{N}(0, 1)$, under correct model specification.

In highly flexible, parameterized models, structural misspecifications most frequently manifest as severe deviations in the extreme tails of the residual distribution. To quantify this, we propose utilizing the test statistic from the Anderson-Darling (AD) test as a robust, numerical metric for out-of-sample distributional calibration and model selection. By placing heavier mathematical weight on the tails of the empirical distribution function compared to other standard tests, the AD statistic is highly sensitive to distributional ill-conditioning on the tails. Therefore, when evaluating a set of candidate GNDR architectures or base distributions on the test set, the model minimizing the AD statistic is preferred.

Finally, the flexibility of the GNDR framework naturally accommodates the integration of domain-specific evaluation metrics when appropriate. For instance, in the application of the framework to the TCGA-BRCA survival dataset (Section \ref{sec:applications}), the primary clinical objective is to derive accurate patient-specific survival curves and rank patients by their relative mortality risk. Given the high proportion of right-censored observations in such data, relying on standard normalized quantile residuals is mathematically sub-optimal; evaluating the cumulative distribution function at a censored event time does not yield a true uniform residual, but rather a censored lower bound. In these contexts, survival-specific diagnostics are strictly preferable. Specifically, model calibration can be evaluated using Cox-Snell residuals, which naturally incorporate right-censoring information.
%Furthermore, out-of-sample predictive discrimination can be directly quantified using complementary metrics such as the Inverse Probability of Censoring Weighted (IPCW) concordance index \citep{uno2011c}.

\section{Real-life Applications}
\label{sec:applications}

In this section, we apply the GNDR framework across diverse datasets with distinct statistical structures. By spanning different data modalities and predictive objectives, we demonstrate the broad generalizability and practical efficacy of the proposed methodology.

\subsection{Dutch Boys' Body Mass Index}

To benchmark the performance of the GNDR framework against the traditional GAMLSS architecture, we consider a standard dataset for distributional regression widely utilized within the \texttt{gamlss} R ecosystem. Specifically, we analyze the cross-sectional Body Mass Index (BMI) and age of 7,294 Dutch boys. This dataset serves as a rigorous foundational test for modeling highly non-linear, heteroscedastic growth dynamics. As our objective here is to compare the results with the GAMLSS foundation, we do not split the data into train, validation and test sets, considering only a single batch.

We assume the BMI of the $i$-th boy, $y_i$, follows a Box-Cox $t$ (BCT) distribution, denoted as $\text{BCT}(\mu_i, \sigma_i, \nu_i, \tau_i)$. The modern classical GAMLSS specification represents the distributional parameters as additive smooth functions of age:
\begin{equation*}
	\begin{cases}
		\mu_i &= h_1(x_i),\\
		\log( \sigma_i ) &= h_2(x_i),\\
		\nu_i &= h_3(x_i),\\
		\log(\tau_i) &= h_4(x_i),
	\end{cases}
\end{equation*}
where $h_k(\cdot)$ represents an arbitrary smoothing function for $k \in \{1, 2, 3, 4\}$, and $x_i$ is the observed age. For our classical baseline, we define $h_k$ using penalized B-splines, denoted as $\text{pb}(\cdot)$.

Under the proposed GNDR framework, the native parameter vector is $\boldsymbol \phi = (\mu, \sigma, \nu, \tau)^\top$. We restrict the structured classical components by setting $\mathcal{G}_k \equiv 0$ for all $k$, allowing an arbitrary neural network architecture $\mathcal{F}$ to process the raw age and output the four predictive signals. Following the additive structure outlined in Equation (\ref{eq:gndr_parameters_structure}), the GNDR model is formulated as:
\begin{equation*}
	\begin{cases}
		\mu_i &= \mathcal{F}_1(x_i; \boldsymbol \omega),\\
		\log(\sigma_i) &= \mathcal{F}_2(x_i; \boldsymbol \omega),\\
		\nu_i &= \mathcal{F}_3(x_i; \boldsymbol \omega),\\
		\log(\tau_i) &= \mathcal{F}_4(x_i; \boldsymbol \omega).
	\end{cases}
\end{equation*}
Leveraging the universal approximation capabilities of deep neural networks, there is no requirement to manually pre-process or apply fractional power transformations to the age variable to capture sharp growth trajectories. We define $\mathcal{F}$ as a feed-forward architecture comprising two hidden layers. The first layer contains 24 neurons, while the second acts as an aggressive representational bottleneck with only 2 neurons, both with Gaussian Error Linear Unit (GELU) activation functions. As established in Section \ref{sec:asymptotic_theory}, restricting the dimension of the last hidden layer implicitly regularizes the model, ensuring the statistical degrees of freedom are preserved for stable variance estimation. The final output layer maps to 4 nodes with linear activations, corresponding to the unconstrained parameters of the BCT distribution.

The network weights $\boldsymbol \omega$ are optimized to maximize the conditional log-likelihood function of the sample, defined analytically as:
\begin{equation*}
	\ell(\boldsymbol \omega \mid \boldsymbol y, \boldsymbol x) = \sum_{i=1}^n \left[ (\nu_i-1) \log y_i - \nu_i \log \mu_i - \log \sigma_i + \log f_T(z_i) - \log F_T\left( (\sigma_i|\nu_i|)^{-1} \right) \right],
\end{equation*}
where $f_T(\cdot)$ and $F_T(\cdot)$ are the probability density function and cumulative distribution function, respectively, of a standard Student's $t$-distribution with $\tau_i$ degrees of freedom, and the scaled variable $z_i$ is defined piecewise as:
\begin{equation*}
	z_i = \begin{cases}
		\frac{1}{\sigma_i \nu_i}\left[\left(\frac{y_i}{\mu_i}\right)^{\nu_i}-1\right], & \text{ if } \nu_i \neq 0,\\
		\frac{1}{\sigma_i} \log \left(\frac{y_i}{\mu_i}\right), & \text{ if } \nu_i = 0.
	\end{cases}
\end{equation*}

While the GNDR framework natively leverages the automatic differentiation (autodiff) capabilities of the TensorFlow backend, calculating the gradients of special functions like $F_T$ via numerical approximations can introduce computational bottlenecks or numerical instability during stochastic optimization. To ensure robust and efficient convergence, we override the standard autograd tape in this specific application. Using the \texttt{@tf.custom\_gradient} decorator, we explicitly inject the exact analytical first- and second-order derivatives for the BCT distribution derived by \citet{rigby2006using}.

%Figure \ref{fig:dutch_boys_gamlss_thetaflow} compares the predicted parameter curves as a function of age. The trajectories for the location ($\mu$), scale ($\sigma$), and skewness ($\nu$) match closely between both models. For the degrees of freedom ($\tau$), both models exhibit distinct curves; however, this discrepancy does not indicate that one model is strictly superior. Rather, it highlights the well-documented challenge of estimating the degrees of freedom in a Student's $t$-distribution due to flat likelihood surfaces. Because all estimated values for $\tau$ exceed 10, variations in this parameter produce practically negligible differences in the resulting distribution, as the $t$-distribution naturally converges to a Gaussian-like kurtosis.

Figure \ref{fig:dutch_boys_gamlss_thetaflow} compares the predicted parameter trajectories as a continuous function of age, along with the predicted confidence bounds for both models. The curves for the location ($\mu$), scale ($\sigma$), and skewness ($\nu$) exhibit near-perfect alignment between the classical GAMLSS baseline and the neural-augmented GNDR model. For the kurtosis parameter, $\tau$, the models produce distinct trajectories. However, this discrepancy is not indicative of structural misspecification in either approach. Rather, it reflects the known phenomenon of flat log-likelihood surfaces when estimating the degrees of freedom in $t$-family distributions. Because all estimated values for $\tau$ strictly exceed 10 across the entire age domain, variations within this parameter space produce practically negligible differences in the resulting density, as the heavy tails have effectively converged to Gaussian kurtosis.

\begin{figure}[htbp]
	\centering
	\includegraphics[width=0.9\linewidth]{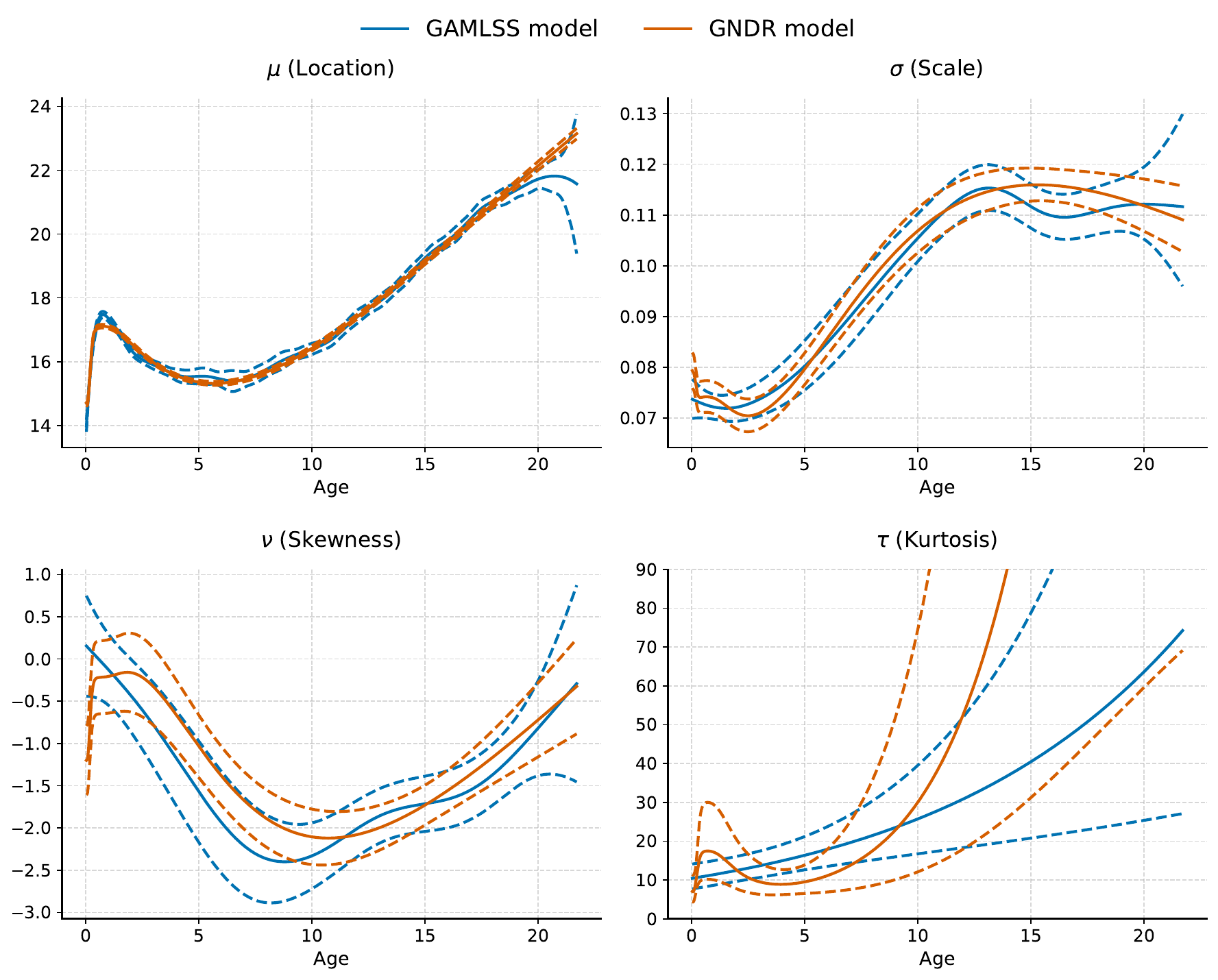}
	\caption{BCT parameters against age, comparing the classical penalized spline GAMLSS predictions with the neural-augmented GNDR framework.}
	\label{fig:dutch_boys_gamlss_thetaflow}
\end{figure}

%Figure \ref{fig:bmi_qqplot_wormplot} compares the traditional Q-Q plots and worm plots for both the GNDR and GAMLSS models, utilizing Gaussian quantile residuals. As anticipated, both models achieve a highly satisfactory diagnostic pattern, with the vast majority of residuals falling well within the confidence envelopes of the worm plots. 

Figure \ref{fig:bmi_qqplot_wormplot} presents the traditional Q-Q plots and worm plots for both models, evaluated using normalized quantile residuals. As anticipated, both frameworks achieve a highly satisfactory diagnostic fit, with the vast majority of residuals safely contained within the 95\% confidence envelopes of the worm plots. This confirms that the unconstrained neural architecture $\mathcal{F}$ successfully learned a feature representation that matches the rigorously tuned penalized splines of the classical baseline, achieving exceptional goodness-of-fit.

\begin{figure}[htbp]
	\centering
	\includegraphics[width=0.9\linewidth]{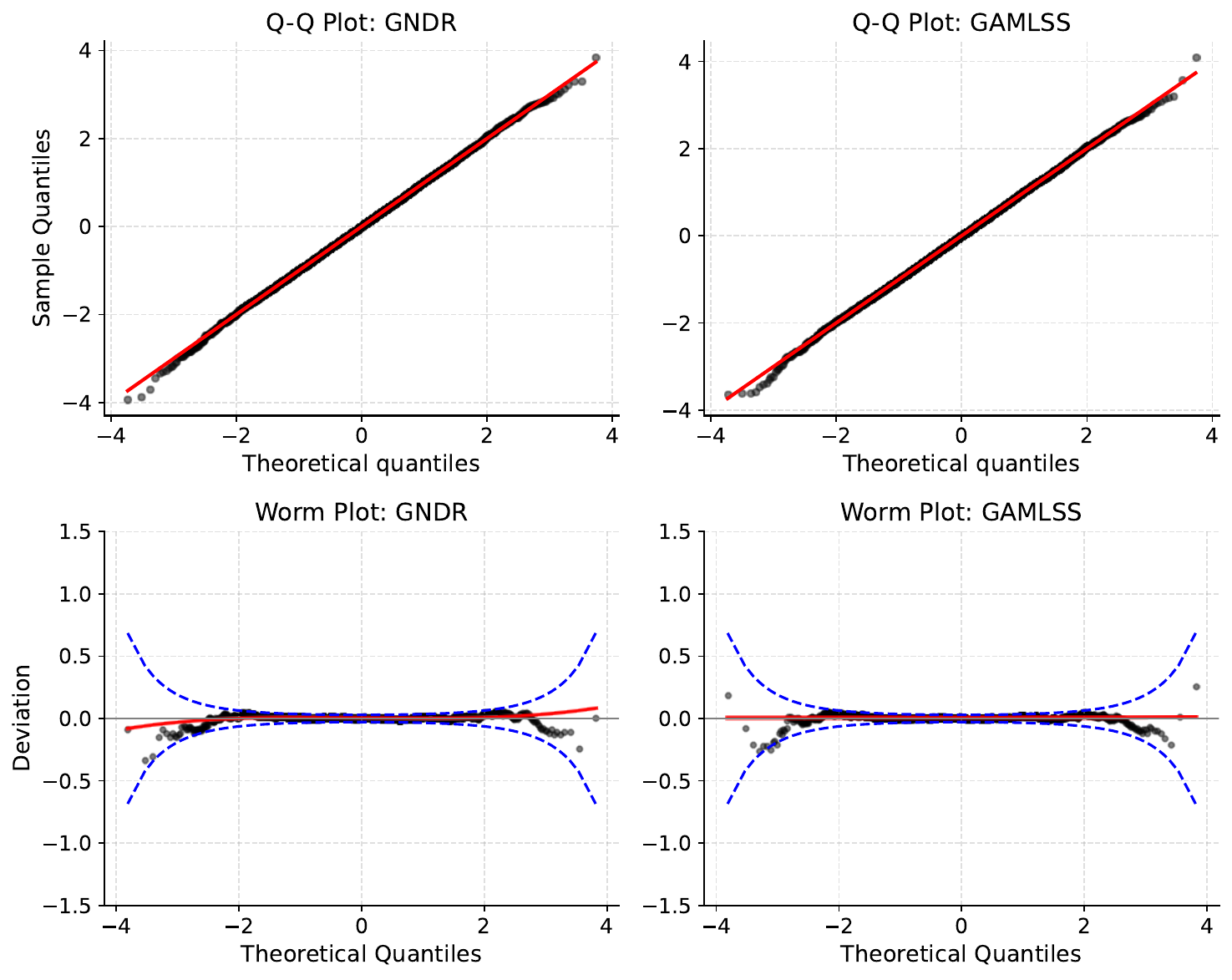}
	\caption{Q-Q plots and worm plots for both the GNDR and GAMLSS models for the Dutch Boys' Body Mass Index dataset.}
	\label{fig:bmi_qqplot_wormplot}
\end{figure}

\subsection{Hospital stay data}

%Also under, \citet{rigby2005generalized}, the authors consider the 1,383 observations from a study at the Hospital del Mar in Barcelona from 1988 to 1990 from \citet{gange et al}. Considering a Beta-Binomial distribution for the number of inappropriate days (noinap) out of the total number of days (los) each patient spent in the hospital, the authors use the GAMLSS flexible structure to compare different combinations of the variables ``age'', ``ward'' ``year'' and ``los'' into the parameters for location and scale of the Beta-Binomial distribution. Here, we select the two best models from their paper (III and IV), which we denote here as (a) and (b). Their structure, alongside our GNDR-based proposals are displayed in Table \ref{tab:gamlss_hospital_stay_models}. As usual, $\text{cs}$ represent a cubic spline. For the GNDR models, by including the sum of varaibles inside a $\mathcal{F}_k$ function, we imply that these variables should be concatenated into a single tensor, $\boldsymbol x$, which in this case will be a simple design matrix. For variables summed outside a $\mathcal{F}_k$, we automatically assume a standard linear model.

We consider the hospital stay dataset originally analyzed by \citet{gange1996use} and subsequently modeled by \citet{rigby2005generalized}. The dataset comprises 1,383 observations from patients at the Hospital del Mar in Barcelona between 1988 and 1990. The objective is to model the number of inappropriate days (\texttt{noinap}) out of the total length of stay (\texttt{los}) each patient spent in the hospital. Because the response variable represents overdispersed binomial counts, it is assumed to follow a Beta-Binomial distribution, parameterized by its location $\mu$ (the mean probability of an inappropriate day) and scale $\sigma$ (the dispersion parameter).

In their foundational work, \citet{rigby2005generalized} utilized the flexible GAMLSS structure to evaluate various combinations of the available predictors: \texttt{age}, \texttt{ward}, \texttt{year}, and \texttt{loglos}, the log-transformed length of stay on both $\mu$ and $\sigma$. We select the two best-performing classical models from their study (Models III and IV), which we denote here as models (a) and (b), respectively. To compare these classical approaches with the GNDR framework, we formulate three alternative neural architectures: (c), (d) and (e).

The predictor structures for all five models are summarized in Table \ref{tab:gamlss_hospital_stay_models}. For classical components, $\text{cs}(\cdot)$ denotes a cubic smoothing spline, and the $+$ operator denotes the standard linear combination of basis vectors. For the neural components, including a sum of variables inside the network mapping $\mathcal{F}_k(\cdot)$ indicates that these variables are column-wise concatenated to form the unstructured input tensor $\boldsymbol{x}$. The chosen architecture for $\mathcal{F}$ in all models was two hidden dense layers with 24 and 1 neuron, respectively, both with GELU activation functions and followed by the final, unconstrained output for the effect on $\text{logit}(\mu)$. We note that for all models, the intercept term is considered for both $\text{logit}(\mu)$ and $\log(\sigma)$.

\begin{table}[]
	\centering
	\caption{Predictor structures for the Beta-Binomial distribution parameters.}
	{\small
	\begin{tabular}{c|c|c}
		\toprule
		Model & Link & Terms\\
		\midrule
		\multirow{2}{*}{(a)} & $\text{logit}(\mu)$ & $\text{year} + \text{ward} + \text{cs}(\text{loglos}, 1)$\\
		& $\log(\sigma)$ & $\text{year} + \text{ward}$\\
		\midrule
		\multirow{2}{*}{(b)} & $\text{logit}(\mu)$ & $\text{year} + \text{ward} + \text{cs}(\text{loglos}, 1) + \text{cs}(\text{age}, 1)$\\
		& $\log(\sigma)$ & $\text{year} + \text{ward}$\\
		\midrule
		\multirow{2}{*}{(c)} & $\text{logit}(\mu)$ & $\text{year} + \text{ward} + \mathcal{F}_1(\text{loglos})$\\
		& $\log(\sigma)$ & $\text{year} + \text{ward}$\\
		\midrule
		\multirow{2}{*}{(d)} & $\text{logit}(\mu)$ & $\text{year} + \text{ward} + \mathcal{F}_1(\text{loglos} + \text{age})$\\
		& $\log(\sigma)$ & $\text{year} + \text{ward}$\\
		\midrule
		\multirow{2}{*}{(e)} & $\text{logit}(\mu)$ & $\mathcal{F}_1\left( \text{year} + \text{ward} + \text{loglos} + \text{age}\right)$\\
		& $\log(\sigma)$ & $\mathcal{F}_2\left( \text{year} + \text{ward} + \text{loglos} + \text{age}\right)$\\
		\bottomrule
	\end{tabular}
	}
	\label{tab:gamlss_hospital_stay_models}
\end{table}

%For instance, model (d) can be written as in Equation (\ref{eq:gndr_parameters_linear_structure}). We have
%\begin{align*}
%	\mathcal{G}_1(\boldsymbol z_1; \boldsymbol \beta_1) = \boldsymbol z_1^\top \boldsymbol \beta_1, &\quad \mathcal{F}_1(\boldsymbol x; \boldsymbol \omega): \text{arbitrary neural network},\\
%	\mathcal{G}_2(\boldsymbol z_2; \boldsymbol \beta_2) = \boldsymbol z_2^\top \boldsymbol \beta_2, &\quad \mathcal{F}_2(\boldsymbol x; \boldsymbol \omega) \equiv 0.
%\end{align*}
%where $\boldsymbol z_1 = \boldsymbol z_2$ represent the design vectors corresponding to variables ``year'' and ``ward'' and $\boldsymbol x$ represent the 2-dimensional numerical vector corresponding to ``loglos'' and ``age''. Model (e) in contrast, represent the most flexible setup we can obtain, by assuming both parameters to be modeled as outputs from a neural network, which gets as inputs all variables at once. We note that out framework assumes a single neural network structure for the entire model, therefore, it still does not admit a structure like $\mathcal{F}_1(year + loglos)$ with $\mathcal{F}_2(year)$.

To formalize this mathematically, consider model (d). Following the semi-parametric structure defined in Equation (\ref{eq:gndr_parameters_linear_structure}), the classical and neural components map as follows:
\begin{align*}
	\mathcal{G}_1(\boldsymbol z_1; \boldsymbol \beta_1) &= \boldsymbol z_1^\top \boldsymbol \beta_1, &\quad \mathcal{F}_1(\boldsymbol x; \boldsymbol \omega) &: \text{arbitrary neural network},\\
	\mathcal{G}_2(\boldsymbol z_2; \boldsymbol \beta_2) &= \boldsymbol z_2^\top \boldsymbol \beta_2, &\quad \mathcal{F}_2(\boldsymbol x; \boldsymbol \omega) &\equiv 0,
\end{align*}
where $\boldsymbol z_1 = \boldsymbol z_2$ represents the shared classical design vector encoding the categorical predictors \texttt{year} and \texttt{ward}, and $\boldsymbol{x}$ represents the 2-dimensional numerical matrix containing \texttt{loglos} and \texttt{age}.

In contrast, model (e) represents the most flexible, fully non-linear architecture available within the framework. By assigning $\mathcal{G}_1 \equiv \mathcal{G}_2 \equiv 0$, both distributional parameters are modeled directly as terminal outputs from the neural network, which receives the concatenation of all available covariates as its input tensor $\boldsymbol{x}$. It is important to note an architectural constraint of the current GNDR implementation: because $\mathcal{F}$ relies on a shared set of deep hidden layers ($\boldsymbol{\omega}_{\text{deep}}$) for joint representation learning, the input tensor $\boldsymbol{x}$ is globally defined for the entire network. Consequently, the framework does not natively support parameter-specific feature masking within the neural component; for example, it cannot simultaneously specify $\mathcal{F}_1(\text{year} + \text{loglos})$ and $\mathcal{F}_2(\text{year})$ unless a multi-branch network architecture is explicitly engineered.

%Since our interest in this application is to select the best predicting model, we split the data in train and test sets of sizes 1,106 and 277, respectively. During training, the first set was also split to accomodate a 221 observations validation set, for early stopping and learning rate decay control. The AD statistics for residual normality of each model are disposed in Table \ref{tab:hospital_stay_ad_values} for both the train and test sets. The selected model was (d). In fact, this model corresponds to an extension of model (b), which was the winning model in its gamlss application according to the generalized deviance. However, instead of individual splines, a bivariate, neural network surface is obtained for the variables ``loglos'' and ``age''. That surface is given if Figure \ref{fig:hospital_stay_surface}. As the graph suggests, the small decrease in $\mu$ for higher age values reported in \citet{rigby2005generalized} corresponds only to higher periods of stay. For lower periods of stay, we can see a clear, increasing linear pattern for $\mu$ with respect to age. Figure \ref{fig:hospital_stay_wormplots} shows the wormplots for train and test sets in both (b) and (d) models. As it can be seen, model (d) clearly shows a much better fit, especially in the test set.

Because our primary objective in this application is out-of-sample model selection and validation, we partitioned the 1,383 observations into a training set ($1,106$) and a strictly isolated hold-out test set ($277$). During the initial optimization phase, the training set was further partitioned to sequester a 221-observation validation set, which was utilized strictly to govern early stopping and dynamic learning rate decay.

The Anderson-Darling (AD) test statistics for the normalized quantile residuals, evaluated on both the training and test sets, are presented in Table \ref{tab:hospital_stay_ad_values}. Based on the out-of-sample evaluation established in Section \ref{sec:residual_analysis}, model (d) emerges as the superior architecture, minimizing the AD statistic on the test set. Crucially, model (d) serves as a direct neural extension of model (b), which was the optimal architecture identified via penalized deviance in the original GAMLSS analysis. However, rather than relying on strictly additive, independent smoothing splines for \texttt{loglos} and \texttt{age}, the unconstrained neural component $\mathcal{F}_1$ naturally learns a bivariate, non-linear interaction surface, which is given in Figure \ref{fig:hospital_stay_surface}. As the surface topography reveals, the marginal decrease in $\mu$ for older patients, originally reported as a global main effect by \citet{rigby2005generalized}, is actually a conditional effect restricted to longer periods of stay. Conversely, for shorter hospitalizations, the model captures a distinctly increasing linear relationship between age and the probability of an inappropriate day.

\begin{table}[]
	\centering
	\caption{Anderson-Darling statistics for all proposed models in the Hospital Stay dataset, evaluated via normalized quantile residuals.}
	{\small
		\begin{tabular}{c|c|c}
			\toprule
			Dataset & Model & AD Statistic\\
			\midrule
			\multirow{5}{*}{Train} & (d) & 1.0621\\
			& (e) & 1.1746\\
			& (a) & 1.5819\\
			& (c) & 1.7100\\
			& (b) & 2.0453\\
			\midrule
			\multirow{5}{*}{Test} & (d) & 0.1585\\
			& (e) & 0.1793\\
			& (c) & 0.2406\\
			& (a) & 0.4101\\
			& (b) & 0.6846\\
			\bottomrule
		\end{tabular}
	}
	\label{tab:hospital_stay_ad_values}
\end{table}

\begin{figure}[htbp]
	\centering
	\includegraphics[width=0.9\linewidth]{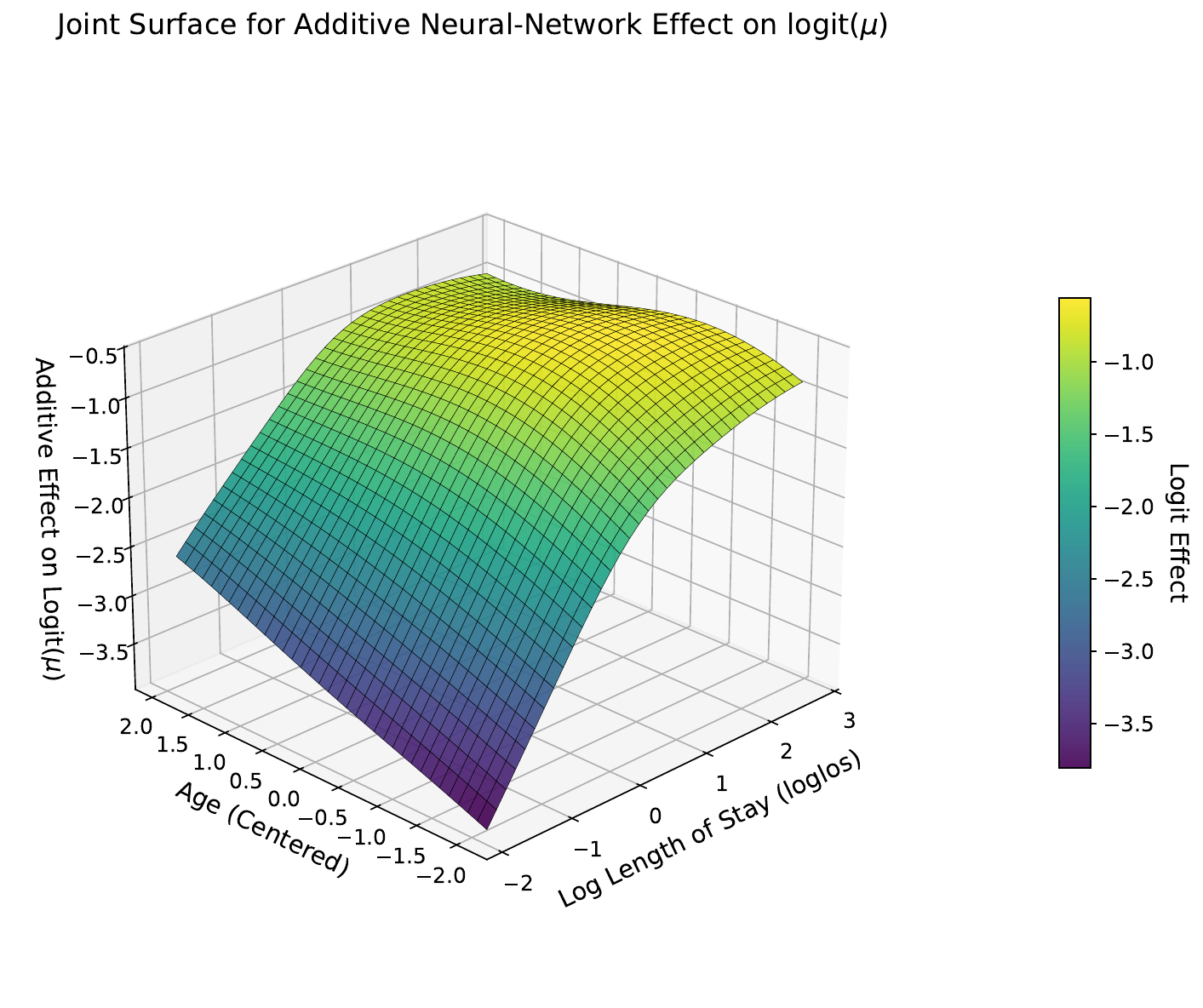}
	\caption{Surface plot illustrating the learned non-linear neural interaction effect ($\mathcal{F}_1$) of \texttt{loglos} and \texttt{age} on $\text{logit}(\mu)$.}
	\label{fig:hospital_stay_surface}
\end{figure}

Figure \ref{fig:hospital_stay_wormplots} compares the worm plots for both the training and test sets between the optimal classical GAMLSS model (b) and the optimal GNDR model (d). As evidenced by the tighter adherence to the horizontal origin, model (d) achieves substantially better distributional calibration. This improvement is particularly pronounced in the hold-out test set, demonstrating that the neural interaction surface provides a superior, generalizable fit without succumbing to overfitting.

\begin{figure}[htbp]
	\centering
	\includegraphics[width=0.9\linewidth]{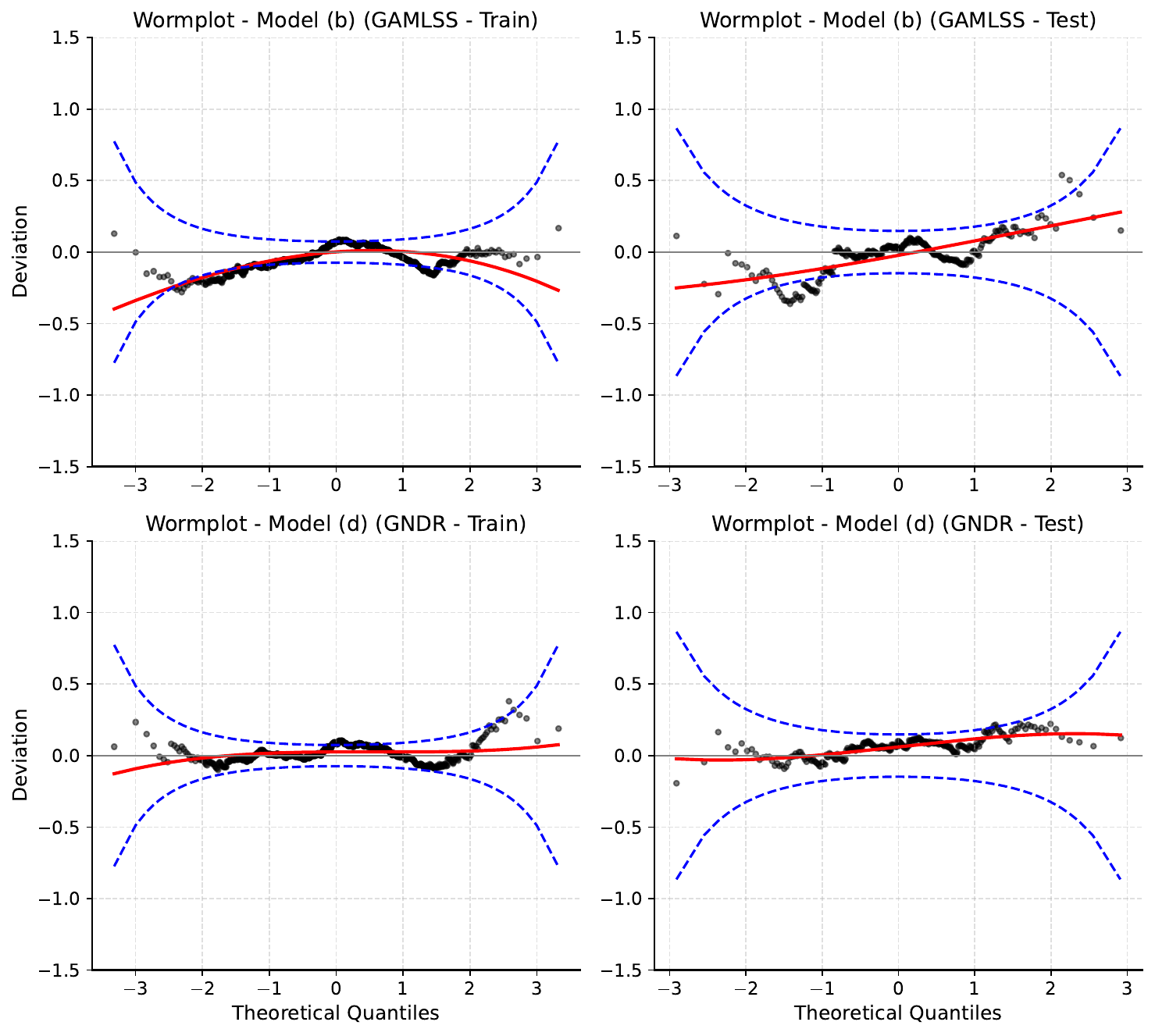}
	\caption{Wormplots for train and test sets for both models (b) and (d) in the Hospital Stay dataset.}
	\label{fig:hospital_stay_wormplots}
\end{figure}

\subsection{Breast Cancer Gene Expression Data (TCGA-BRCA)}
\label{sec:tcga_brca_application}

Genomic studies currently represent the state-of-the-art in oncology research. Presently, numerous complex gene pathways have been identified as direct drivers of breast cancer pathogenesis, guided by foundational genomic databases such as the Kyoto Encyclopedia of Genes and Genomes (KEGG) \citep{kanehisa2000kegg} and further expanded by recent discoveries of novel molecular drivers and metastasis-associated signatures \citep{samad2025breast, yao2026identification}. Among the diverse analytical approaches to high-dimensional omics data, which now rapidly integrate beyond transcriptomics and genomics to include epigenomic, proteomic, and metabolomic layers \citep{adeyemo2026clinical, hemme2026challenges},  gene expression profiling fundamentally relies on analyzing large-scale count matrices derived from multiple patient biopsies.

In this application, we analyze breast cancer gene expression (GE) data from The Cancer Genome Atlas (TCGA-BRCA). The dataset comprises a high-dimensional feature matrix capturing the expression levels of 17,235 selected genes across a cohort of $n = 1,071$ female patients. For the $i$-th patient, the clinical response is defined by their observed survival time and right-censoring indicator, denoted as $(y_i, \delta_i)$. Notably, the cohort exhibits heavy right-censoring, which accounts for 86.09\% of the observations. A similar iteration of this dataset was analyzed by \citet{ching2018cox} to develop Cox-nnet, an architecture that directly integrates high-dimensional GE data into the semi-parametric Cox proportional hazards model \citep{cox1972regression}. While Cox-nnet effectively bridges deep learning with survival analysis through partial likelihood maximization, our objective in this application is to demonstrate the GNDR framework's capacity for fully parametric, non-linear survival modeling. We note that, to our knowledge, no other works were able to analyse highly dimensional gene expression data directly into a fully parametric survival model, as allowed by the GNDR framework.

Drawing structural inspiration from Cox-nnet, we design a specific neural network architecture $\mathcal{F}(\boldsymbol x ; \boldsymbol \omega)$ to process the high-dimensional gene count vector $\boldsymbol x$. The input tensor is fed into an initial hidden layer comprising 32 neurons with hyperbolic tangent (\texttt{tanh}) activations, regularized via an Elastic Net penalty \citep{zou2005regularization} on the weights. This is followed by a Dropout layer and a subsequent dense hidden layer containing two neurons equipped with \texttt{softplus} activations. The terminal dense layer yields either one or two outputs, depending on the specific parametric base distribution being fitted. Compared to the 132-neuron width proposed in the original Cox-nnet paper, we opted for a significantly more aggressive bottleneck in the first hidden layer (32 neurons). Our empirical diagnostics revealed that this extreme dimensionality reduction is necessary to ensure stable maximum likelihood convergence within a fully parametric survival context.

To optimally tune the $L_1$ and $L_2$ penalty weights of the Elastic Net alongside the dropout rate, we employed 5-fold cross-validation over a discrete hyperparameter grid, evaluated strictly on the training set. Because the hyperparameter optimization is conducted internally within each fixed distributional family, model selection was governed by the maximization of the out-of-fold log-likelihood.

In this application, we evaluate the Weibull, Log-normal, and Log-logistic distributions, alongside their respective extensions under a standard mixture cure model framework \citep{berkson1952survival}. The mixture cure model accommodates survival scenarios where a subset of patients has a positive probability of becoming long-term survivors, meaning they are effectively cured and no longer susceptible to the event of interest. This approach naturally introduces a patient-specific cure probability, $p_i$, as an additional distributional parameter. Under this mixture framework, the marginal survival function for the $i$-th patient is formulated as:
\begin{equation*}
	S_i(t) = p_i + (1 - p_i)S_{0i}(t),
\end{equation*}
where $S_{0i}(t)$ denotes the proper survival function for the susceptible population. Crucially, $S_{0i}(t)$ is also patient-specific; it is analytically evaluated using the remaining base distribution parameters from the predicted native vector $\boldsymbol{\phi}_i$ (e.g., the Weibull scale and shape parameters). The complete set of candidate models evaluated in this application, along with their neural network predictor structures and cross-validated hyperparameters, is detailed in Table \ref{tab:tcga_models}. We note that the structured term $\xi_1$ restricts the corresponding parameter to a global constant across all patients, whereas the neural component $\mathcal{F}$ dynamically generates patient-specific parameters driven by $\boldsymbol x$, i.e. the individual's high-dimensional gene expression profile.

A critical structural challenge arises specifically within the Cure Mixture Weibull model. Because the proper Weibull survival function, $S_0(t) = \exp(-(t/\lambda)^k)$, approaches $1$ for all $t$ as the scale parameter $\lambda \to \infty$, the unconstrained neural network can artificially emulate a high cure fraction simply by driving $\lambda$ toward infinity, effectively stretching the susceptible survival curve into a flat line. This severe topological confounding renders the cure fraction $p$ and the scale $\lambda$ weakly identified, routinely causing optimization algorithms to degenerate (i.e., exploding $\lambda$ and forcing $p \to 0$). To guarantee strict parameter identifiability and force the network to explicitly utilize the cure fraction, we rigorously restrict the Weibull scale parameter to an empirically plausible biological upper bound. As indicated in Table \ref{tab:tcga_models}, we enforce $\lambda \in (0, 40)$ by projecting the neural output through a scaled-logit link function, $\text{logit}(\lambda / 40)$.

\begin{table}[]
	\centering
	\caption{Survival models and their respective predictor structures for the TCGA-BRCA dataset.}
	{\small
		\begin{tabular}{c|c|c|c|ccc}
			\toprule
			Model & Parameter & link & Terms & $L_1$ & $L_2$ & Dropout\\
			\midrule
			\multirow{2}{*}{Weibull} & shape & $\log(k)$ & $\xi_1$ & \multirow{2}{*}{0.0005} & \multirow{2}{*}{0.0001} & \multirow{2}{*}{0.3}\\
			& scale & $\text{log}(\lambda)$ & $\mathcal{F}_1(\boldsymbol x ; \boldsymbol \omega)$ & & &\\
			\midrule
			\multirow{2}{*}{Log-normal} & log-scale & $\log(\sigma)$ & $\xi_1$ & \multirow{2}{*}{0.0001} & \multirow{2}{*}{0.005} & \multirow{2}{*}{0.1}\\
			& log-location & $\mu$ & $\mathcal{F}_1(\boldsymbol x ; \boldsymbol \omega)$ & & &\\
			\midrule
			\multirow{2}{*}{Log-logistic} & shape & $\log(\beta)$ & $\xi_1$ & \multirow{2}{*}{0.0} & \multirow{2}{*}{0.0005
			} & \multirow{2}{*}{0.1}\\
			& scale & $\log(\alpha)$ & $\mathcal{F}_1(\boldsymbol x ; \boldsymbol \omega)$ & & &\\
			\midrule
			\multirow{3}{*}{Cure Mixture Weibull} & shape & $\log(k)$ & $\xi_1$ & \multirow{3}{*}{0.1} & \multirow{3}{*}{0.01} & \multirow{3}{*}{0.45}\\
			& scale & $\text{logit}(\lambda / 40)$ & $\mathcal{F}_1(\boldsymbol x ; \boldsymbol \omega)$ & & &\\
			& cure probability & $\text{logit}(p)$ & $\mathcal{F}_2(\boldsymbol x ; \boldsymbol \omega)$ & & &\\
			\midrule
			\multirow{3}{*}{Cure Mixture Log-normal} & log-scale & $\log(\sigma)$ & $\xi_1$ & \multirow{3}{*}{0.0005} & \multirow{3}{*}{0.0001} & \multirow{3}{*}{0.15}\\
			& log-location & $\mu$ & $\mathcal{F}_1(\boldsymbol x ; \boldsymbol \omega)$ & & &\\
			& cure probability & $\text{logit}(p)$ & $\mathcal{F}_2(\boldsymbol x ; \boldsymbol \omega)$ & & &\\
			\midrule
			\multirow{3}{*}{Cure Mixture Log-logistic} & shape & $\log(\beta)$ & $\xi_1$ & \multirow{3}{*}{0.0001} & \multirow{3}{*}{0.005} & \multirow{3}{*}{0.1}\\
			& scale & $\log(\alpha)$ & $\mathcal{F}_1(\boldsymbol x ; \boldsymbol \omega)$ & & &\\
			& cure probability & $\text{logit}(p)$ & $\mathcal{F}_2(\boldsymbol x ; \boldsymbol \omega)$ & & &\\
			\bottomrule
		\end{tabular}
	}
	\label{tab:tcga_models}
\end{table}

Given the high dimensionality of the parameter space, the heterogeneity of the cross-validated hyperparameters, and the intrinsic geometry of each baseline distribution, we reiterate from Section \ref{sec:residual_analysis} that penalized log-likelihood metrics are theoretically invalid for direct structural comparison across these models. Furthermore, to rigorously account for the right-censored lifetimes, we compute randomized normalized quantile residuals \citep{li2021model}. Let $F(\cdot \mid \widehat{\boldsymbol \phi}_i)$ denote the predicted cumulative distribution function for the $i$-th patient based on the models defined in Table \ref{tab:tcga_models}. The randomized residual is defined as:
\begin{equation}
	R_i = \begin{cases}
		\Phi^{-1}\left( F(y_i \mid \widehat{\boldsymbol \phi}_i ) \right), & \text{ if } \delta_i = 1,\\
		\Phi^{-1} \left( U_i \right), & \text{ if } \delta_i = 0,
	\end{cases}
	\label{eq:resid_quantile_censors}
\end{equation}
where $U_i \sim \text{Uniform}( F(y_i \mid \widehat{\boldsymbol \phi}_i ), 1 )$ and $\Phi^{-1}$ is the standard normal quantile function. By drawing from this truncated uniform distribution, the procedure properly acknowledges that the unobserved true event time corresponds to a cumulative probability strictly greater than the probability evaluated at the censoring time.

While this randomization provides a theoretically consistent extension for goodness-of-fit assessment in survival models, we emphasize that under extreme right-censoring, this metric can artificially favor naive models. If a misspecified model assigns a spuriously high survival probability (i.e., $F(y_i \mid \widehat{\boldsymbol \phi}_i) \approx 0$) to the vast majority of censored patients ($\delta_i = 0$), the residuals will be sampled almost directly from the unconstrained standard normal generator, producing an illusion of perfect fit. To mitigate this vulnerability, it is strictly recommended to pair the numerical residual analysis with a visual inspection of the individual survival curves, ensuring that the model's predictions do not degenerate into uninformative, flatline survival trajectories.

Within the specialized domain of survival analysis, predictive accuracy is predominantly evaluated using metrics such as Harrell's C-index \citep{harrell1982evaluating}, the IPCW C-index \citep{uno2011c}, and the Integrated Brier Score \citep{graf1999assessment}. While these metrics are invaluable for assessing clinical discrimination and pointwise survival calibration, they fundamentally evaluate the model merely as a predictive engine. The core objective of the GNDR framework, however, is to capture the complete topological structure of the true data-generating distribution. Because our primary interest lies in verifying global distributional fidelity, we maintain the empirical Anderson-Darling (AD) statistic on the randomized residuals as our definitive comparative criterion, supported by the aforementioned cautionary visual verifications. Table \ref{tab:tcga_models_comparison} presents the AD test statistic for each candidate architecture, identifying the Log-logistic distribution as the optimal fit.

\begin{table}[]
	\centering
	\caption{Anderson-Darling statistics for all proposed models in the TCGA-BRCA dataset, evaluated via normalized quantile residuals.}
	{\small
		\begin{tabular}{c|l|c}
			\toprule
			Dataset & Model & AD Statistic\\
			\midrule
			\multirow{6}{*}{Train} & Log-logistic & 0.1613\\
			& Mixture Weibull & 0.2028\\
			& Weibull & 0.2487\\
			& Mixture Log-logistic & 0.2501\\
			& Log-normal & 1.6964\\
			& Mixture Log-normal & 1.7643\\
			\midrule
			\multirow{6}{*}{Test} & Log-logistic & 0.3092\\
			& Weibull & 0.3181\\
			& Mixture Log-logistic & 0.4675\\
			& Mixture Log-normal & 0.5594	\\
			& Mixture Weibull & 0.5659\\
			& Log-normal & 0.7375\\
			\bottomrule
		\end{tabular}
	}
	\label{tab:tcga_models_comparison}
\end{table}

Given that the Log-logistic distribution was selected via out-of-sample distributional calibration, Figure \ref{fig:loglogistic_kaplan_meier} displays the predicted patient-specific survival curves juxtaposed against the overall Kaplan-Meier estimator for the cohort. As expected from a well-calibrated model, the marginal average of the predicted survival curves tracks the empirical survival curve closely. Furthermore, this visual inspection confirms that the individual predicted curves do not degenerate into flatline trajectories. This effectively validates our diagnostic strategy, ensuring that the randomized normalized quantile residuals provide a robust goodness-of-fit assessment rather than an artifact of naive extrapolation.

\begin{figure}[htbp]
	\centering
	\includegraphics[width=0.9\linewidth]{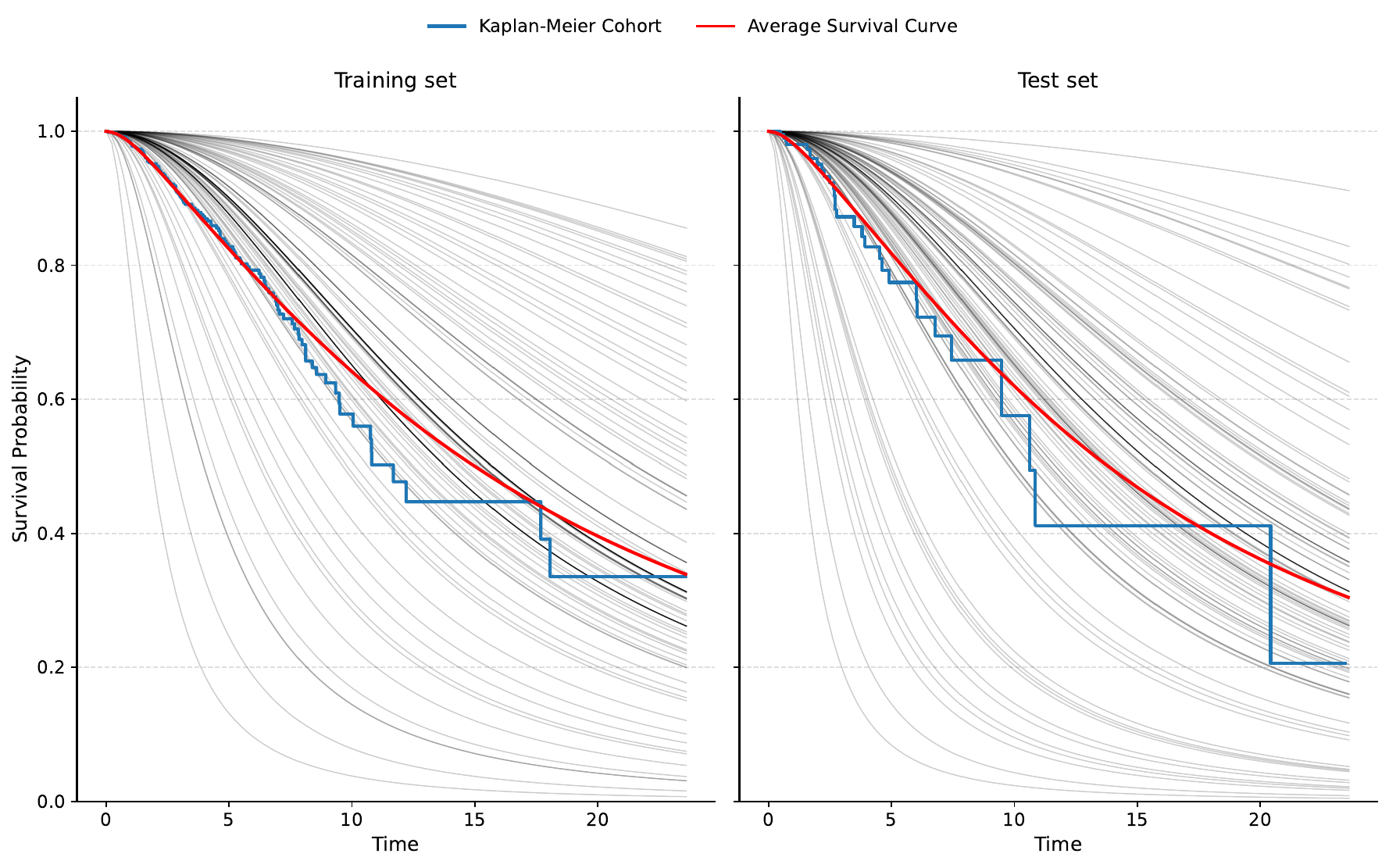}
	\caption{Predicted survival curves, alongside their average curve juxtaposed to the Kaplan-Meier estimates.}
	\label{fig:loglogistic_kaplan_meier}
\end{figure}

%A final way to assess a survival model goodness-of-fit is given by computing its Cox-Snell (CS) residuals, $\text{CS}_i = H(y_i ; \widehat{\boldsymbol \phi}_i) = -\log S(y_i; \widehat{\boldsymbol \phi}_i)$, that represent the predicted cumulative hazard functions evaluated at their respective observed lifetime. Under a correct model specification, these residuals are distributed according to an exponential distribution with rate parameter equal to one. Therefore, a useful plot to verify the model's fit is given by plotting the empirical Nelson Aalen (NA) cumulative hazard estimates for $\text{CS}_i$ against the observed residuals. Under a correct model specification, those points should lie around the identity line. Figure \ref{fig:cox_snell_plot} shows this plot for the Log-logistic final model. As expected, the obtained NA estimator tends to be very close to the identity line. 

A definitive method for evaluating the global fit of a survival model is the analysis of its Cox-Snell (CS) residuals, defined as $R_i^{\text{CS}} = H(y_i ; \widehat{\boldsymbol \phi}_i) = -\log S(y_i; \widehat{\boldsymbol \phi}_i)$, which represent the predicted cumulative hazard evaluated at the observed follow-up time. Under correct model specification, these true residuals behave as a right-censored sample from a standard exponential distribution, $\text{Exp}(1)$. Consequently, model calibration is visually assessed by plotting the Nelson-Aalen (NA) cumulative hazard estimator of the $R_i^{\text{CS}}$ values, strictly retaining their original censoring indicators $\delta_i$, against the residuals themselves. For a perfectly specified model, this NA trajectory should tightly adhere to the identity line. Figure \ref{fig:cox_snell_plot} presents this diagnostic for the optimal Log-logistic model. As anticipated, the estimated NA cumulative hazard is highly congruent with the identity line, visually corroborating the model's structural validity.

\begin{figure}[htbp]
	\centering
	\includegraphics[width=0.9\linewidth]{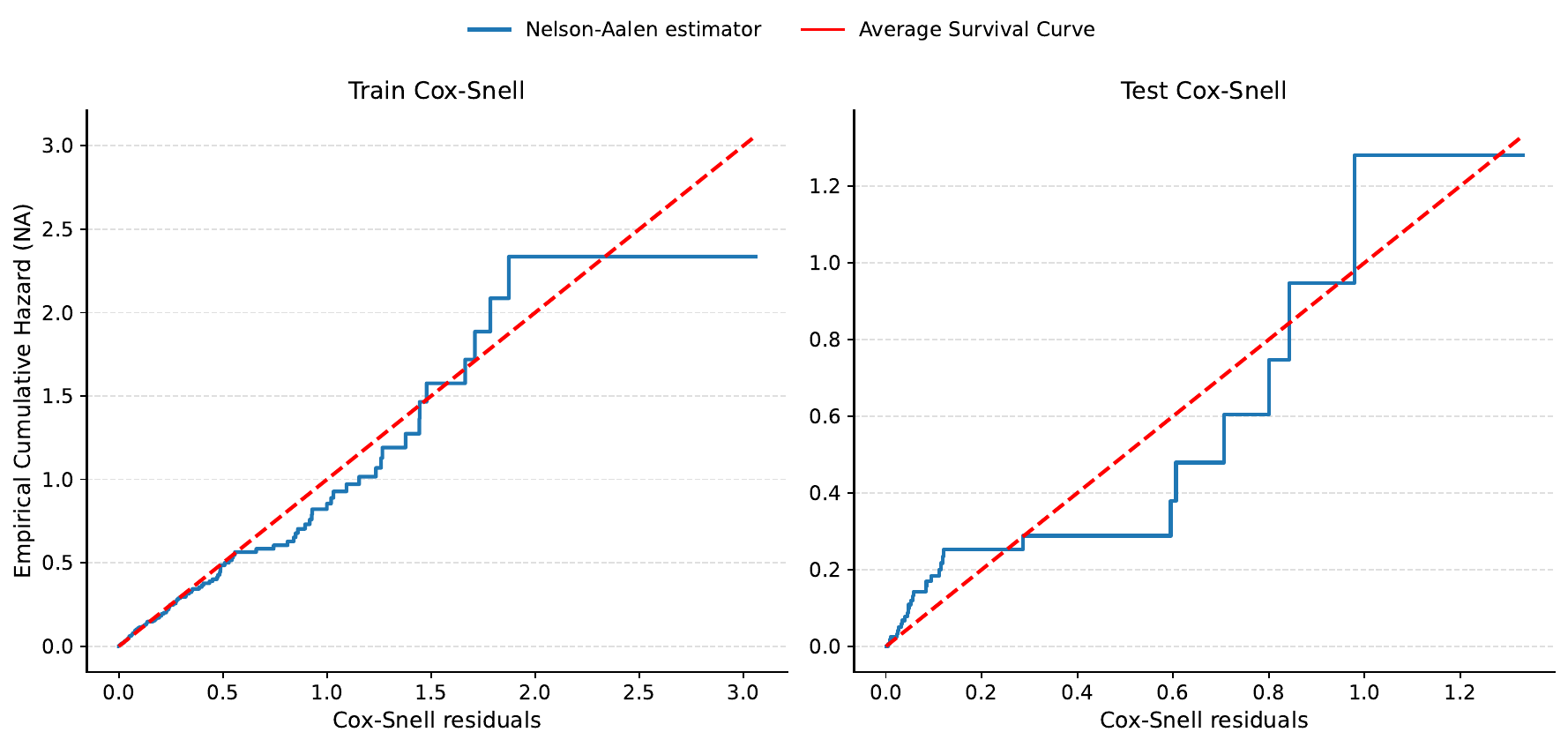}
	\caption{Nelson-Aalen cumulative hazard estimator of the Cox-Snell residuals for the optimal Log-logistic model, evaluated against the unit diagonal to assess global distributional calibration.}
	\label{fig:cox_snell_plot}
\end{figure}

Finally, the randomized normalized quantile residuals for both the training and test sets are visualized via worm plots in Figure \ref{fig:tcga_wormplots}. While the Log-logistic model comfortably outperforms the alternative baseline distributions, this sensitive diagnostic still reveals some deviations from the theoretical standard normal quantiles. Since our analysis was essentially restricted to comparing three base distributions, exploring a wider variety of parametric families remains a practical point for future research.

\begin{figure}[htbp]
	\centering
	\includegraphics[width=0.9\linewidth]{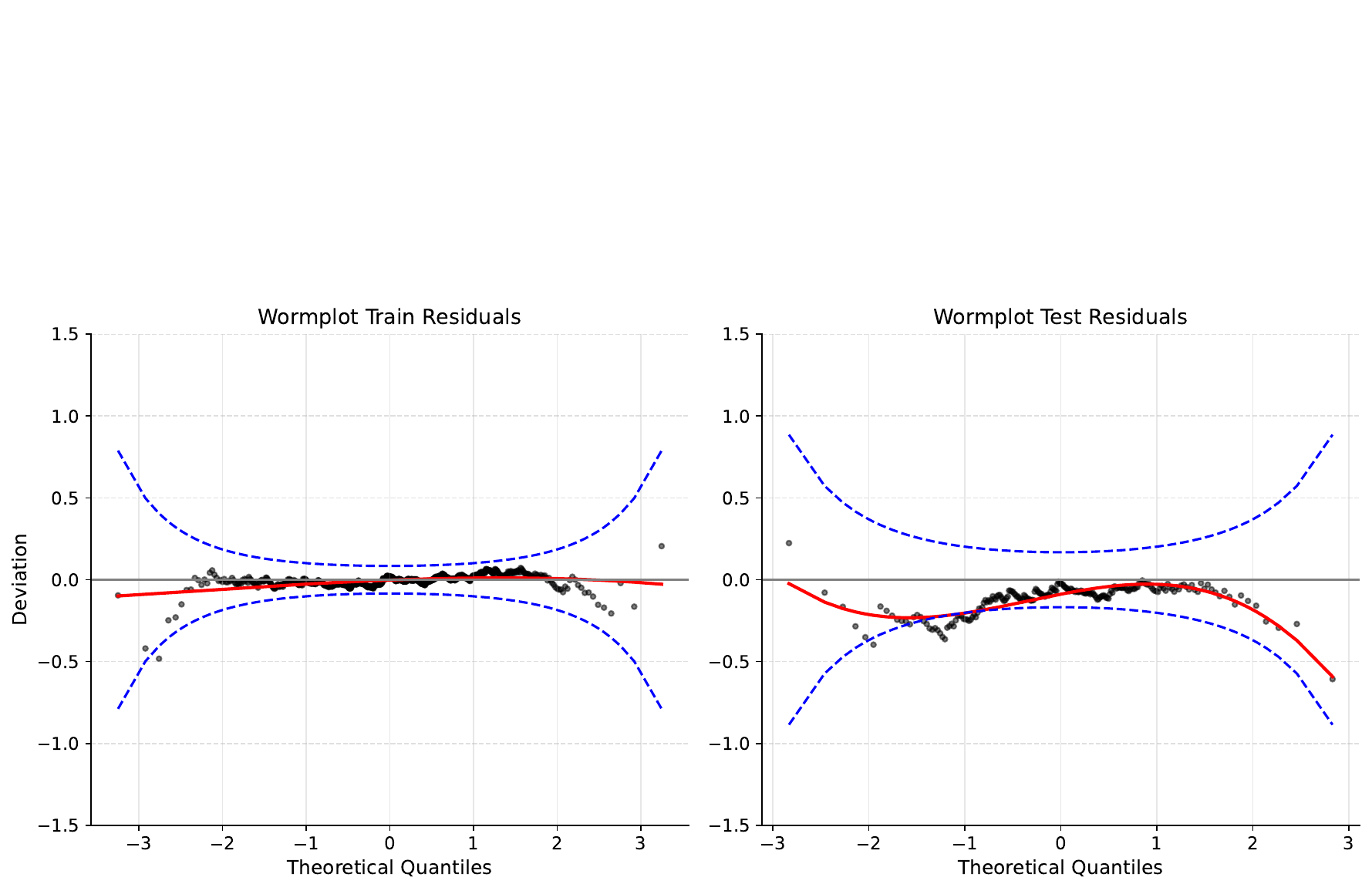}
	\caption{Worm plots of the randomized normalized quantile residuals for the Log-logistic model.}
	\label{fig:tcga_wormplots}
\end{figure}

To practically demonstrate the asymptotic results developed in Section \ref{sec:asymptotic_theory}, we can construct pointwise confidence bands for the estimated survival trajectories. For a fixed grid of evaluation times $t \in (0, y_{\max}]$, we define the scalar-valued transformation function $f_t: \mathbb{R}^p \to [0, 1]$ as:
\begin{equation*}
	f_t(\boldsymbol \phi_i) = S(t \mid \boldsymbol \phi_i),
\end{equation*}
where $S$ is the survival function from the Log-logistic model, evaluated at the maximum likelihood estimate for $\widehat{\boldsymbol \phi}$. Then, using \eqref{eq:asymptotic_distribution_f_phi} is straightforward. Figure \ref{fig:loglogistic_predicted_survival} shows the predicted survival curves for the observed genomic profile of three different patients alongside their respective confidence bands through time.

where $S$ is the survival function of the chosen Log-logistic base distribution. By applying the multivariate Delta method established in \eqref{eq:asymptotic_distribution_f_phi}, we can seamlessly compute the pointwise asymptotic variance, $\widehat{\text{Var}}\left(S(t \mid \widehat{\boldsymbol \phi}_i)\right)$, by evaluating the Jacobian of $f_t$ and the empirical covariance matrix $\widehat{\boldsymbol{\Sigma}}_{\boldsymbol \phi}$ at the maximum likelihood estimate $\widehat{\boldsymbol \phi}_i$.

Figure \ref{fig:loglogistic_predicted_survival} illustrates this capability by displaying the predicted survival trajectories for the observed high-dimensional genomic profiles of three distinct patients. The dashed lines correspond to the 95\% pointwise asymptotic confidence bands derived directly from the model's analytical covariance structure, highlighting the framework's ability to quantify localized, patient-specific predictive uncertainty. The clear visual separation between the individual survival trajectories demonstrates the model's strong discriminatory power, confirming its ability to effectively stratify patient risk based purely on high-dimensional genomic signatures while simultaneously satisfying rigorous parametric distributional assumptions.

\begin{figure}[htbp]
	\centering
	\includegraphics[width=0.9\linewidth]{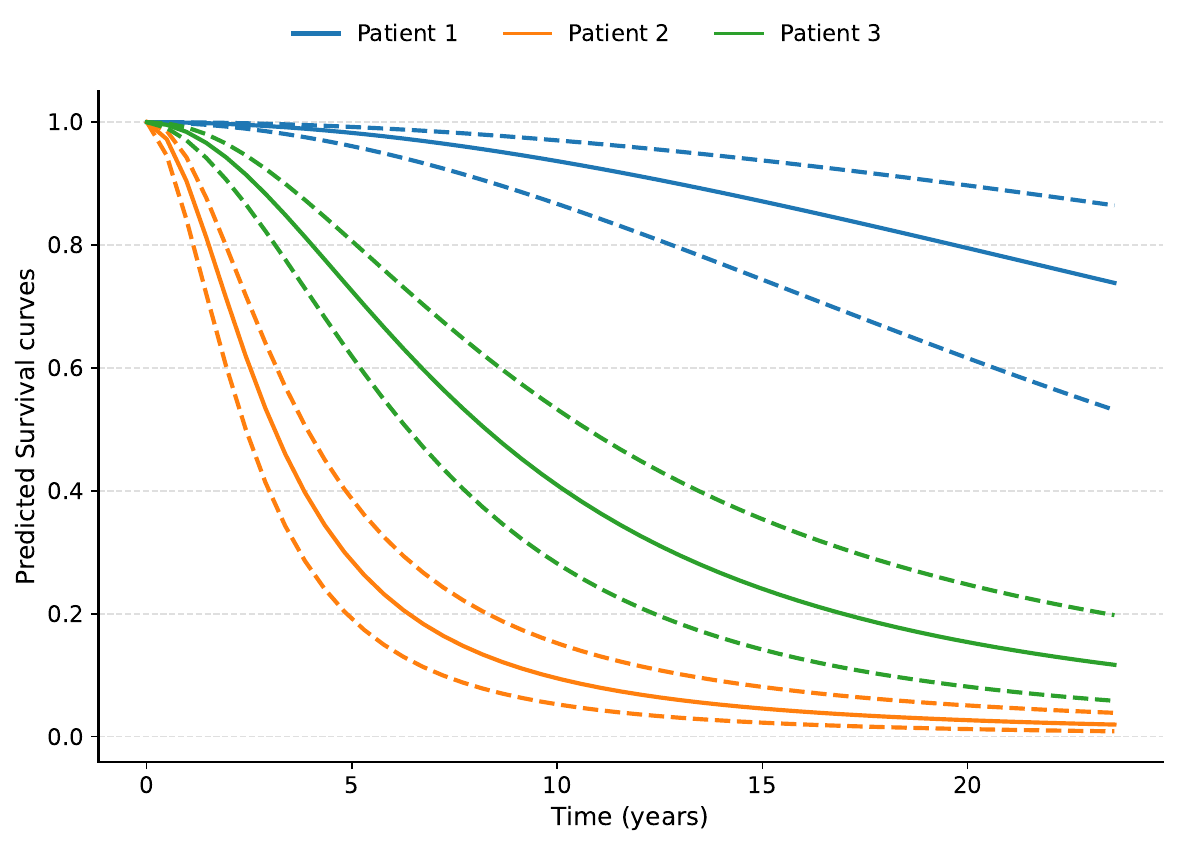}
	\caption{Patient-specific predicted survival curves for three distinctly profiled individuals using the optimal Log-logistic GNDR model.}
	\label{fig:loglogistic_predicted_survival}
\end{figure}

\subsection{Age Prediction from Raw Image Data (UTKFaces)}

For our final application, we demonstrate the GNDR framework's capacity to seamlessly process unstructured, high-dimensional image data by analyzing the UTKFace benchmark dataset \citep{zhang2017utkface}. This dataset comprises 23,708 facial images annotated with the subjects' true ages, representing a highly diverse cohort across age, gender and ethnicity. While UTKFace is predominantly employed in the computer vision literature to benchmark deterministic, point-predictive deep learning architectures, we approach the task strictly through the lens of distributional regression. Specifically, our objective is to map the raw pixel space of a subject's facial image directly to a subject-specific predictive probability distribution for their age.

To capture the inherent heteroscedastic uncertainty in visual age estimation, we assume the conditional distribution of a subject's age, given their image, follows a normal distribution strictly truncated at zero. This parametric choice is theoretically motivated to preserve the highly interpretable location and scale parameters of the Gaussian family while rigorously enforcing the physical boundary of non-negative age. This zero-lower-bound constraint is mathematically critical for the dataset's substantial sub-population of young children, where a standard, unconstrained Gaussian model would inevitably assign positive probability mass to negative ages. Following the GNDR structural formulation established in Equation \eqref{eq:gndr_parameters_structure}, we map the network's unconstrained outputs to the strictly positive pre-truncation location and scale parameters via logarithmic link functions:
\begin{equation*}
	\begin{cases}
		\log(\mu) &= \mathcal{F}_1(\boldsymbol x; \boldsymbol \omega),\\
		\log(\sigma) &= \mathcal{F}_2(\boldsymbol x; \boldsymbol \omega).
	\end{cases}
\end{equation*}

To map the $200 \times 200 \times 3$ RGB input tensor to this bivariate distributional parameter space, we must employ a deep convolutional architecture. Training such highly parameterized networks from random initializations is notoriously unstable and computationally prohibitive for complex spatial topographies. Therefore, we adopt a transfer learning paradigm utilizing the MobileNetV2 architecture \citep{sandler2018mobilenetv2}. Originally pre-trained on large-scale, heterogeneous object recognition tasks (e.g., ImageNet), MobileNetV2 provides robust, hierarchical feature extractors. To specialize these generic representations for the targeted domain of facial feature extraction, we freeze the initial 100 layers of the network. That way, we retain the foundational edge and texture detectors, while specializing all subsequent transferred convolutional blocks.

Table \ref{tab:utkfaces_neuralnet_architecture} details the complete network topology and the dense prediction heads mapped to the final parameter space. During the initial global optimization phase, 1,861,440 of the 2,430,338 total structural weights are actively trained. The vast majority of these optimized weights reside within the unfrozen terminal blocks of MobileNetV2, seamlessly connecting the 1280-dimensional latent spatial feature vector to the final distributional parameters via GELU-activated dense layers. In the subsequent localized fine-tuning phase, we strictly freeze the upstream feature extraction layers and exclusively optimize the final linear layer. This strategic isolation yields a highly tractable sub-space of exactly 130 terminal statistical parameters. By treating the upstream network as a fixed, non-linear basis expansion, we can rigorously extract the analytical asymptotic covariance matrix of these final parameters using the framework established in Equation \eqref{eq:covariance_last_layer}.

Ideally, structural hyperparameters such as the dropout rate would be selected via exhaustive cross-validation, mirroring the procedure utilized for the structured genomic data in Section \ref{sec:tcga_brca_application}. However, the extreme dimensionality of the UTKFace image tensor renders grid-search cross-validation over deep convolutional architectures computationally prohibitive. Consequently, we fix the dropout rate at an empirically robust baseline of $0.3$. This value provides sufficient regularization to the dense prediction heads to mitigate overfitting, without inducing the excessive variance that higher dropout rates can inflict on the final parameter estimation.

\begin{table}[ht]
	\centering
	\caption{Structural summary of the convolutional neural network architecture utilized for the UTKFace application, demonstrating the integration of the MobileNetV2 feature extractor with the distributional prediction heads.}
	{\small
		\begin{tabular}{c|ccc}
			\toprule
			Layer & Output dimension & Activation & Number of weights\\
			\midrule
			MobileNetV2 & 1280 & --- & 2,257,984\\
			Dropout & 1280 & --- & 0\\
			Dense 1 & 128 & GELU & 163,968 \\
			Dense 2 & 64 & GELU & 8,256 \\
			Dense 3 & 2 & Linear & 130 \\
			\midrule
			Total & --- & --- & 2,430,338\\
			\bottomrule
		\end{tabular}
	}
	\label{tab:utkfaces_neuralnet_architecture}
\end{table}

By fitting the GNDR framework in the truncated normal setting and applying the theoretical results from Section \ref{sec:asymptotic_theory}, we obtain a subject-specific $2 \times 2$ asymptotic covariance matrix for $\widehat{\mu}_i$ and $\widehat{\sigma}_i$ for any given image. Establishing this localized uncertainty structure allows us to estimate the asymptotic distribution of any differentiable function of these parameters via the Delta method. Our objective is to construct rigorous predictive age intervals. Let $q_{c}(\mu, \sigma)$ denote the $c$-quantile of the zero-truncated normal distribution. Analytically, this is expressed as:
\begin{equation*}
	q_c(\mu, \sigma) = \mu + \sigma z_{v(c)},
\end{equation*}
where $z_{v(c)} = \Phi^{-1}(v(c))$ represents the $v(c)$-quantile of the standard normal distribution, with the adjusted probability mass $v(c) = \Phi(-\mu / \sigma) + c \left(1 - \Phi(-\mu / \sigma)\right)$. Applying the result in \eqref{eq:asymptotic_distribution_f_phi}, we compute the approximate asymptotic covariance matrix for the vector $[q_{0.025}(\widehat{\mu}, \widehat{\sigma}), q_{0.975}(\widehat{\mu}, \widehat{\sigma})]^\top$. We then construct an uncertainty-adjusted predictive interval by taking the lower $95\%$ confidence bound of $q_{0.025}$ and the upper $95\%$ confidence bound of $q_{0.975}$.

For this application, the UTKFace dataset was partitioned into a training set ($n = 15,173$), a validation set used strictly for early-stopping regularization ($n = 3,793$), and a held-out test set ($n = 4,742$). Evaluating these uncertainty-adjusted predictive intervals, we observe empirical coverage probabilities of $97.21\%$ on the training set. The elevated in-sample coverage perfectly aligns with statistical theory: by incorporating the parameter estimation uncertainty into the interval boundaries, the constructed bounds are intrinsically more conservative than naive plug-in predictive intervals, guaranteeing a coverage strictly exceeding the nominal $95\%$ in-sample. While classical statistical modeling typically halts at in-sample calibration, our rigorous out-of-sample evaluation reveals an expected generalization gap, dropping the coverage to $\approx 89.35\%$. Given the extreme dimensionality of the unstructured pixel space, the inherent ambiguity of visual age estimation and the slight overfit on the training set caused by the second training step, this represents a highly robust predictive capacity.

Figure \ref{fig:predictive_utkfaces_ages} illustrates six diverse test subjects from the UTKFaces dataset alongside their respectively predicted zero-truncated normal densities and estimated uncertainty-adjusted tolerance intervals for the ages. This visual diagnostic demonstrates the GNDR framework's capacity to map complex spatial topologies directly to rigorous inferential uncertainty. Notably, the framework effectively captures the inherent heteroscedasticity of the visual estimation task: lower-quality or structurally ambiguous images (e.g., the bottom-left panel) naturally induce a higher predictive variance, whereas high-resolution, well-illuminated portraits (e.g., the top-right panel) yield substantially narrower intervals. Because we did not manually engineer these quality metrics, this confirms that the fine-tuned convolutional layers successfully internalized image resolution and structural ambiguity, mapping these latent features directly into the distributional scale parameter $\sigma$.

\begin{figure}[htbp]
	\centering
	\includegraphics[width=0.9\linewidth]{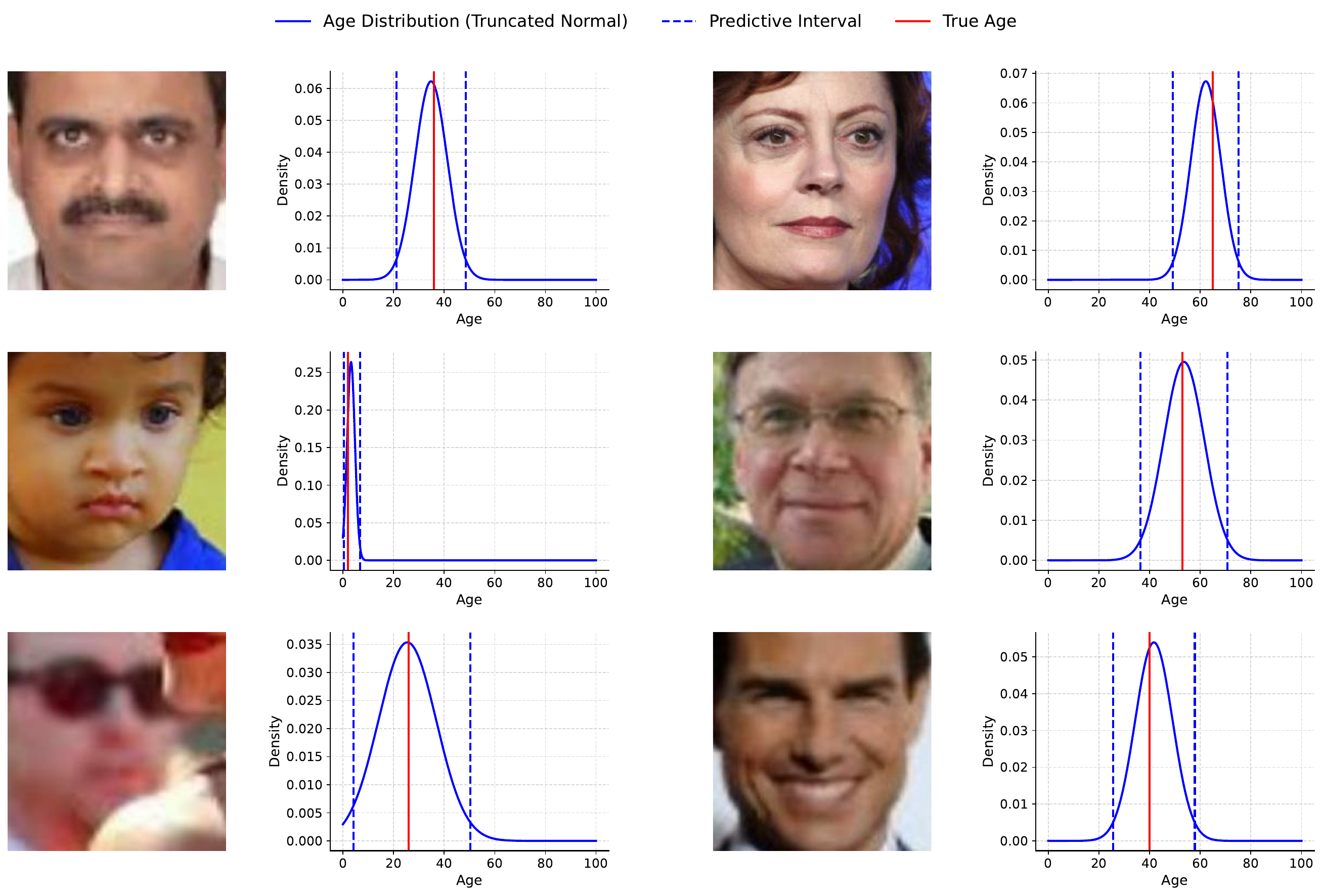}
	\caption{Out-of-sample age prediction for six diverse subjects from the UTKFace test set.}
	\label{fig:predictive_utkfaces_ages}
\end{figure}

\section{Concluding Remarks}
\label{sec:conclusions}

In this article, we introduced the Generalized Neural Distributional Regression (GNDR) framework, a novel statistical paradigm that seamlessly embeds deep neural architectures into the parametric space of classical probability distributions. Historically, the statistical modeling landscape has been bifurcated: classical regression frameworks offer rigorous interpretability and asymptotic guarantees but fail to scale to unstructured, high-dimensional data, while deep learning excels in complex feature extraction but largely operates as a highly parameterized, deterministic black box. The GNDR framework effectively bridges this divide. By utilizing neural networks strictly as non-linear basis expansions for parametric link functions, we preserve the structural elegance of classical distribution theory while unlocking the capacity to process unstructured tensors, such as large-scale genomic arrays and raw pixel imagery.

Crucially, we have demonstrated that this integration does not require abandoning traditional statistical rigor. By establishing strict conditions for structural identifiability and defining a stable, two-step maximum likelihood optimization procedure, we proved that the resulting estimators maintain asymptotic consistency. Furthermore, by evaluating the Fisher Information matrix of the isolated terminal layers, we established a mathematically sound pipeline for uncertainty quantification. Through the multivariate Delta method, the GNDR framework successfully generates localized, subject-specific confidence bands and tolerance intervals, a critical requirement for precision medicine and high-stakes predictive modeling.

Empirically, the framework transcends the structural limitations of foundational methodologies, such as the GAMLSS family. While GAMLSS relies heavily on additive functional forms that struggle with ultra-high dimensionality, GNDR natively ingests complex spatial and non-linear topologies. This was definitively illustrated across our applications: resolving complex interaction surfaces in hospital stay durations, capturing heteroscedastic survival and cure rates from breast cancer genomic profiles, and mapping unstructured facial features directly to zero-truncated normal parameters.

Looking forward, this methodology opens a highly promising frontier for future statistical research. Methodological extensions could explore the integration of bespoke, heavy-tailed baseline distributions, the adaptation of the framework for longitudinal or clustered data via neural mixed-effects models, and the investigation of alternative regularization penalties for the non-linear feature spaces.

To catalyze this future research and ensure the broad accessibility of the GNDR framework, we have developed the open-source Python package \textit{thetaflow}. Implementing generalized neural distributional models from scratch requires navigating highly unstable tensor calculus, specialized loss functions for right-censoring, and dynamic gradient tracking. The \textit{thetaflow} package natively abstracts these computational complexities. It empowers applied researchers and statisticians to easily construct, fit, and evaluate highly flexible distributional models, bridging the gap between strict statistical rigor and modern deep learning infrastructure without the friction of manual tensor manipulation.

 \section*{Funding}

This study was financed, in part, by the São Paulo Research Foundation (FAPESP), Brasil. Process Number \#2025/10988-0.

\begin{appendices}
	\section{Appendix}
	\label{appendix}
	
	\begin{proof}[Proof of Theorem \ref{thm:asymptotic_normality}]
		Conditional on the frozen upstream latent feature representations, $\boldsymbol{v}(\boldsymbol{x})$, the optimization space in the second training stage of the GNDR framework formally reduces to a finite-dimensional Vector Generalized Linear Model (VGLM). Consequently, to establish asymptotic normality, we must verify the standard maximum likelihood regularity conditions outlined by \citet{yee2015vector}. Provided the chosen baseline distribution $\mathcal{D}$ is suitably regular, the GNDR framework satisfies these conditions as follows:
		
		\begin{itemize}
			\item \textbf{Regularity Condition I (Fixed Dimension):} The dimension of $\boldsymbol \theta_\text{last}$ is fixed. The dimension $q = b + M p_\mathcal{F} + p_\mathcal{F}$ depends strictly on the size of the unconstrained structured vector $b$, the number of active neural distributional parameters $p_\mathcal{F}$, and the fixed width of the terminal hidden layer $M$. None of these architectural constants grow with the sample size $n$. 
			
			\item \textbf{Regularity Condition II (Identifiability):} The parameter $\boldsymbol \theta_\text{last}$ must be identifiable. This condition is satisfied through two components: first, the base distribution $\mathcal{D}$ must belong to an identifiable family. Second, the concatenated design matrix formed by the classical covariates $\boldsymbol{z}$ and the learned latent features $\boldsymbol{v}(\boldsymbol{x})$ must be of full column rank. This algebraic requirement is naturally satisfied by our structural restriction (Section 2.2) that classical and neural features mapping to the same parameter must occupy strictly disjoint covariate spaces, preventing perfect collinearity.
			
			\item \textbf{Regularity Condition III (Common Support):} The support of the distribution $\mathcal{D}$ must not depend on the parameter vector $\boldsymbol \theta_\text{last}$. In the GNDR framework, the topological boundaries of the response variable $Y$ (e.g., $y \in (0, \infty)$ for the Weibull and Log-logistic models, or $y \in [0, \infty)$ for the truncated normal) are mathematically fixed and remain completely independent of the dynamic location or scale parameters predicted by $\boldsymbol \theta_\text{last}$.
			
			\item \textbf{Regularity Conditions IV \& V (Parameter Space):} The unconstrained parameter space $\Omega_{\boldsymbol \theta_\text{last}}$ is an open set, and the true parameter vector $\boldsymbol \theta_\text{last}^{(0)}$ lies strictly in its interior. Because the final dense layer utilizes a linear activation function and the classical fixed effects are unbounded, the parameter space is the entire real coordinate space, $\Omega_{\boldsymbol \theta_\text{last}} = \mathbb{R}^q$, which trivially satisfies both conditions.
			
			\item \textbf{Regularity Conditions VI \& VII (Smoothness and Information):} The first three derivatives of the log-likelihood must exist, and the order of integration and differentiation must be interchangeable. By restricting our framework to baseline distributions with smooth, twice-continuously differentiable log-likelihood surfaces (excluding non-smooth families such as the Laplace distribution ) and utilizing strictly differentiable link functions, these analytic conditions hold.
		\end{itemize}
		
		Because the restricted, conditional log-likelihood satisfies these fundamental regularity conditions, the conditional maximum likelihood estimator $\widehat{\boldsymbol \theta}_\text{last}$ is an exact root of the score equation. Therefore, by classical maximum likelihood theory, $\widehat{\boldsymbol \theta}_\text{last}$ is asymptotically consistent and converges in distribution to a multivariate normal:
		$$\sqrt{n} \left(\widehat{\boldsymbol \theta}_\text{last} - \boldsymbol \theta_\text{last}^{(0)} \right) \xlongrightarrow{d} \mathcal{N}_q\left(\mathbf{0}, \mathcal{I}^{-1}(\boldsymbol \theta_\text{last}^{(0)})\right)$$
		concluding the proof.
	\end{proof}
	
\end{appendices}

\newpage

\bibliographystyle{Chicago}
\bibliography{references}

\end{document}